\newtheorem{defn}{Definition}
\newtheorem{thrm}{Theorem}
\newtheorem{lem}{Lemma}
\newcommand{\minisection}[1]{\vspace{2mm}\noindent{\textbf{#1}}}
\DeclareMathOperator*{\argmax}{argmax}
\begin{document}

\title{EnSolver: Uncertainty-Aware Ensemble CAPTCHA Solvers with Theoretical Guarantees}

\author{\name Duc C. Hoang \email hcduc@comp.nus.edu.sg \\
       \addr Department of Computer Science \\
       National University of Singapore, Singapore
       \AND
       \name Behzad Ousat \email bousa001@fiu.edu \\
       \addr Knight Foundation School of Computing and Information Sciences \\
       Florida International University, USA
       \AND
       \name Amin Kharraz \email ak@cs.fiu.edu \\
       \addr Knight Foundation School of Computing and Information Sciences \\
       Florida International University, USA
       \AND
       \name Cuong V. Nguyen \email viet.c.nguyen@durham.ac.uk \\
       \addr Department of Mathematical Sciences \\ 
       Durham University, UK}

\editor{My editor}

\maketitle

\begin{abstract}%   <- trailing '%' for backward compatibility of .sty file
The popularity of text-based CAPTCHA as a security mechanism to protect websites from automated bots has prompted researches in CAPTCHA solvers, with the aim of understanding its failure cases and subsequently making CAPTCHAs more secure. Recently proposed solvers, built on advances in deep learning, are able to crack even the very challenging CAPTCHAs with high accuracy. However, these solvers often perform poorly on out-of-distribution samples that contain visual features different from those in the training set. Furthermore, they lack the ability to detect and avoid such samples, making them susceptible to being locked out by defense systems after a certain number of failed attempts. In this paper, we propose EnSolver, a family of CAPTCHA solvers that use deep ensemble uncertainty to detect and skip out-of-distribution CAPTCHAs, making it harder to be detected. We prove novel theoretical bounds on the effectiveness of our solvers and demonstrate their use with state-of-the-art CAPTCHA solvers. Our experiments show that the proposed approaches perform well when cracking CAPTCHA datasets that contain both in-distribution and out-of-distribution samples.\footnote{The source code for this paper is available at: \url{https://github.com/HoangCongDuc/ensolver.git}.}
\end{abstract}

\begin{keywords}
  captcha solver, ensemble method, uncertainty estimation, theoretical bounds
\end{keywords}

\section{Introduction}
\label{sec:sample1}

Automated web bots are getting increasingly more sophisticated in imitating human behaviors and evading detection. In most important and consequential situations, these adversarial operations can result in credential stuffing, account hijacking and data breaches, vulnerability scanning and exploitation. 
CAPTCHA (Completely Automated Public Turing test to tell Computers and Humans Apart) \citep{von2003captcha} has been a common security mechanism to defend against malicious automatic bot activities. That is, the website generates a CAPTCHA challenge and requests the remote agent to solve the CAPTCHA. The core insight is that solving a CAPTCHA is straightforward for real users but difficult for automated bots. Among different types of CAPTCHAs, text-based CAPTCHAs~\citep{gao2016robustness} present users with a noisy image of a short text string and ask them to enter the correct string. They are among the most popular types of CAPTCHAs due to simple implementation and user-friendliness~\citep{deng3e}.

In this adversarial landscape, as text-based CAPTCHAs became more popular, many automatic solving techniques have also been developed to evade detection. These techniques are interesting from the security viewpoint as they help researchers understand the weaknesses of these CAPTCHAs and subsequently make them more secure~\citep{tang2018research}. With recent advances in deep learning and computer vision, state-of-the-art text-based CAPTCHA solvers often employ an end-to-end approach, where a sophisticated deep learning model will predict the output text string directly from the raw pixels of the input image~\citep{deng3e, ousat2024matter}. Although these solvers can crack very challenging CAPTCHAs, they are unable to make correct predictions on out-of-distribution samples (i.e., images that are visually different from those in their training set). By exploiting this limitation, defense systems can detect these automatic CAPTCHA solvers and lock their access if a solver fails a certain number of attempts. Thus, from the attacker's perspective, it is desirable to equip the solvers with the ability to recognize and skip these out-of-distribution samples and request a new CAPTCHA to increase their success rate.

In this paper, we propose EnSolver (\underline{En}semble \underline{Solver}), a simple end-to-end text-based CAPTCHA solver that is capable of detecting and skipping out-of-distribution CAPTCHAs. EnSolver uses a deep ensemble~\citep{lakshminarayanan2017simple} to quantify its predictive uncertainty, subsequently deciding whether to answer a given CAPTCHA or to skip and request a new one. More specifically, EnSolver computes a distribution over predicted text strings and uses this distribution to predict the final text string along with its uncertainty estimation. The solver then decides whether to skip the input image based on this quantity. To use the EnSolver in a real scenario, we extend it to LEnSolver (\underline{L}imited-skip \underline{EnSolver}), where we allow the solver to skip a maximum number of times before forcing it to make a prediction. Our approach is general and can be used with any type of base models in the deep ensemble.

Besides the general algorithmic framework, we also develop new theoretical properties for our EnSolver and LEnSolver approaches. In particular, we prove a lower bound on the rate of making the right decisions of the EnSolver models and a lower bound on the success rate of the LEnSolver models. Both of these bounds depend on a novel quantity, called the out-of-distribution error bound, that is defined in terms of the ensemble size, the output domain size, and the uncertainty threshold of the solver. This out-of-distribution error bound can be shown to upper bound the EnSolver's error rate on out-of-distribution data as well as the EnSolver's wrong prediction rate on in-distribution data. This quantity is usually very small in real CAPTCHA solving problems, thus helping us control the error rates of our solvers.

We apply our approaches to two state-of-the-art types of text-based CAPTCHA solvers: the 3E-Solver~\citep{deng3e} and the object detection-based solver~\citep{ousat2024matter}, both of which employ a complex deep learning architecture without any uncertainty estimation mechanism. To train our solvers, we also construct a new CAPTCHA dataset that contains bounding box labels, which are required to train object detection models. We evaluate the good decision and success rates of our solvers on datasets containing both in-distribution and out-of-distribution data, where the latter are collected from eight different public CAPTCHA datasets~\citep{deng3e}. The experiment results show that our solvers are consistently better than the baselines, and the theoretical lower bounds can give us a good idea of the actual empirical performance in practice. 

\minisection{Contributions.} Our paper makes the following contributions:
\begin{enumerate}
    \item We propose EnSolver and LEnSolver, novel uncertainty-aware CAPTCHA solvers that use deep ensemble uncertainty to avoid making wrong predictions on out-of-distribution inputs (Section~\ref{sec:EnSolver_LEnSolver}).
    \item We prove new theoretical lower bounds on the effectiveness of our solvers (Section~\ref{sec:theory}) and show through experiments that these bounds can be good indicators of our solvers' empirical performance in practice (Section~\ref{sec:experiment}). 
    \item We demonstrate the use of our approaches on two state-of-the-art CAPTCHA solvers and show empirically that our solvers can achieve high success rates compared to the two original baselines (Section~\ref{sec:experiment}).
\end{enumerate}

\begin{figure*}[tb]
    \centering
    \includegraphics[width=\textwidth]{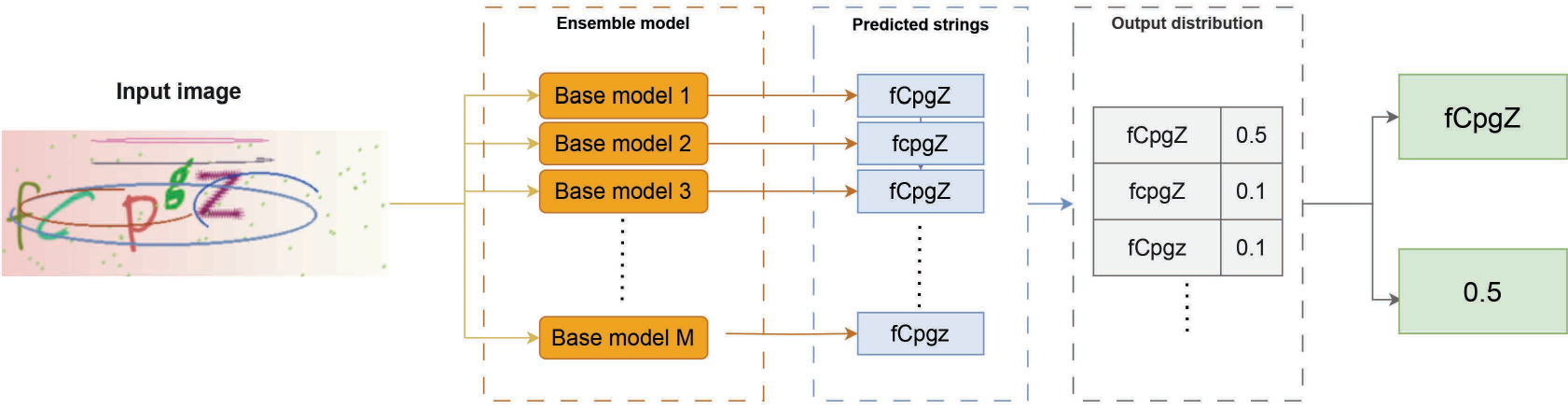}
    \caption{The main component of EnSolver that predicts an output string together with an associated uncertainty level given an input CAPTCHA image. The input image is first fed into each base model, each of which produces a string as output. The output strings form a distribution, which is used to compute the final prediction and the uncertainty level.}
    \label{fig:ensolver}
\end{figure*}

\section{Related Work}

\minisection{CAPTCHA Solvers.}
Early solvers often use a segmentation-based approach consisting of two main stages: character segmentation and character recognition~\citep{chellapilla2004using, yan2007breaking, yan2008low}. \citet{yan2007breaking} used the color difference to localize characters in simple 2-color CAPTCHAs. Their subsequent work~\citep{yan2008low} used a histogram of foreground pixels to vertically segment the characters into chunks and then find the connected components to get individual characters. Once the characters are segmented, it is easy to recognize them individually using a neural network~\citep{chellapilla2004using, kopp2017breaking}. Since segmentation is a crucial stage in many solvers, a number of anti-segmentation features (e.g., character overlapping, hollow scheme, noise arcs, and complicated background) have been employed to make CAPTCHAs more resistant against such solvers~\citep{tang2018research}. To bypass these anti-segmentation features, various preprocessing techniques were then developed. For example, \citet{gao2016robustness} proposed using image binarization, noise arcs removal, and image rectification before segmentation, while \citet{ye2018yet} used an image-to-image translation model~\citep{isola2017image} to turn a noisy input CAPTCHA into an easier image for the downstream solver. \citet{tian2020generic} used three generator networks to decompose an input CAPTCHA into a background and a character layer.

Recently, end-to-end solvers have been proposed that use a single deep learning model to solve a CAPTCHA directly from the input image without any segmentation. \citet{noury2020deep} proposed a CNN-based solver that has multiple character classification heads, each of which is responsible to predict a character in the CAPTCHA image. However, this model can only predict CAPTCHAs with fixed length. This problem was overcome by \citet{tian2020generic} using a null character class. \citet{li2021end} used a convolutional recurrent neural network to train a solver on cycle-GAN generated data and then employed active transfer learning to optimize it on real schemes using a small labeled dataset. \citet{deng3e} proposed a semi-supervised solver based on the encoder-decoder architecture and attention mechanism which only requires small portion of the training dataset to be labeled. Another recent work by~\citet{ousat2024matter} used an object detection model to localize and predict each character in a given CAPTCHA simultaneously.

In this paper, we not only propose a new method for solving text-based CAPTCHAs but also develop a rigorous mathematical theory for this important problem. To the best of our knowledge, the only previous work that provided some theoretical analyses for CAPTCHA systems is~\citet{li2018captcha}, where a game-theoretical approach was used to analyze the interactions between the defender and the attacker as well as the benefits of human solvers alongside machine solvers. Our work significantly differs from theirs since we only consider machine solvers and prove the effectiveness of these solvers using probability theory. Our theoretical results are the first of this kind for CAPTCHA solvers.

\minisection{Uncertainty Estimation and Out-of-distribution Detection.} 
Our paper is related to uncertainty estimation and out-of-distribution detection~\citep{abdar2021review}. Here we review only the latest related work for deep learning models. Previous works~\citep{guo2017calibration, pereyra2017regularizing} have shown that modern deep learning models often do not provide well-calibrated uncertainty, in the sense that they tend to make incorrect predictions on out-of-distribution data with high confidence. Several lines of work have been proposed to improve uncertainty estimation for deep learning. For example, Bayesian neural networks~\citep{neal1995bayesian} provide a principled framework for studying uncertainty in deep learning. However, exact Bayesian inference is intractable for modern deep learning architectures and approximate inference is usually required~\citep{chen2014stochastic, blundell2015weight, gal2016dropout, ritter2018scalable, zhang2020csgmcmc, rudner2021tractable}. Popular approximate inference methods include variational inference~\citep{blundell2015weight, rudner2021tractable}, Markov chain Monte Carlo methods~\citep{chen2014stochastic, zhang2020csgmcmc}, and Laplace approximation~\citep{ritter2018scalable}. Besides Bayesian approaches, Monte Carlo dropout~\citep{gal2016dropout} and deep ensembles~\citep{lakshminarayanan2017simple} are also commonly used for deep learning uncertainty quantification. Once a good uncertainty estimation is obtained, it can be straightforwardly used for out-of-distribution detection~\citep{lakshminarayanan2017simple, hendrycks2017baseline}.

Among these methods, deep ensembles~\citep{lakshminarayanan2017simple} provide a simple way for uncertainty estimation that can be considered an approximation of Bayesian model averaging~\citep{wilson2020bayesian}. Its main idea is to train multiple neural networks (called base models) and use them for inference. Training a base model is the same as training an ordinary neural network and the model diversity is ensured by random parameter initialization and shuffling of the training dataset for each base model. This phenomenon was explained by \citet{fort2019deep} using a loss landscape perspective. \citet{d2021repulsive} later proposed an improvement to deep ensembles by introducing a kernelized repulsive term in the update rule to ensure model diversity when the number of parameters is large and the effect of random initialization reduces.

Our work develops a novel theoretical analysis for ensemble methods. Previous work on ensemble theory mainly focused on general ensembles and derived risk bounds for the weighted majority vote classification method~\citep{germain2015risk, laviolette2017risk, masegosa2020second} and the weighted average regression method~\citep{cuong2013generalization, ho2020posterior}. More recent theoretical results also proved the relationship between generalization and diversity of deep ensembles~\citep{ortega2022diversity} and the improvement rates of ensembles over a single model~\citep{theisen2024ensembles}. In contrast, our analysis focuses more on the CAPTCHA solving problem and exploits its structure to derive bounds on the success rates of the solver.

\section{Uncertainty-Aware CAPTCHA Solver Using Deep Ensembles}
\label{sec:EnSolver_LEnSolver}

In this section, we describe our proposed uncertainty-aware CAPTCHA solvers in detail. Our first solver, named \textbf{EnSolver}, uses an ensemble of deep learning-based CAPTCHA solvers to make a prediction along with a corresponding uncertainty estimate on a given input text-based CAPTCHA image. The uncertainty estimates allow EnSolver to detect inputs dissimilar to the training data (i.e., out-of-distribution inputs) and thus can ``skip'' inputs that are hard to crack. 
Detecting out-of-distribution CAPTCHAs is an important feature in the solving process because web applications often define policies on the number of failed attempts before locking an account or injecting a long delay before showing the next CAPTCHA. Consequently, if a trained solver is equipped with a pre-filtering mechanism that can effectively detect and skip CAPTCHAs that are not likely to be solved correctly, it can effectively bypass these account lockout policies and avoid triggering subsequent access failures used to lock out web sessions. We also extend EnSolver to \textbf{LEnSolver}, a new solver for the more practical setting where only a limited number of skips are allowed. LEnSolver also employs the same mechanism as EnSolver to skip uncertain inputs, but it will be forced to make a prediction once the maximum allowable number of skips is reached.

\subsection{Uncertainty-Aware CAPTCHA Solver}

\begin{algorithm}[tb]
\caption{Generic Uncertainty-Aware CAPTCHA Solver}\label{alg:overall}
\begin{algorithmic}
    \Require Trained model $m$, input image $x$, threshold $\tau$
    \State $(y, u) \gets \Call{predict\_with\_uncertainty}{m, x}$
    \If{$u < \tau$}
        \State Predict with $y$
    \Else
        \State Skip $x$
    \EndIf
\end{algorithmic}
\end{algorithm}

\begin{algorithm}[tb]
\caption{Training a Deep Ensemble}\label{alg:de_train}
\begin{algorithmic}
    \Require Labeled training dataset $\mathcal{D}$, number of base models $M$
    \ForAll{$i \in \{1, 2, \ldots, M\}$}
        \State Initialize base model $m_i$ randomly
        \State Train $m_i$ using $\mathcal{D}$
    \EndFor
    \State $m \gets (m_1, m_2, \ldots, m_M)$
    \State \Return $m$
\end{algorithmic}
\end{algorithm}

\begin{algorithm}[tb]
\caption{Uncertainty Estimation with Deep Ensemble}
\label{alg:de_uncertainty}
\begin{algorithmic}
    \Require Ensemble model $m = (m_1, m_2, \ldots, m_M)$, input image $x$
    \Function{predict\_with\_uncertainty}{$m, x$}
        \ForAll{$i \in \{1, 2, \ldots, M\}$}
            \State $y_i \gets \text{predict}(m_i, x)$
        \EndFor
        \State $(s_1, s_2, \ldots, s_N) \gets \text{unique}(y_1, y_2, \ldots, y_M)$
        \ForAll{$j \in \{1, 2, \ldots, N\}$}
            \State $\displaystyle p_j \gets \frac{| \left\{ i \in \{ 1, 2, \ldots, M \} |y_i = s_j \right\} |}{M}$
        \EndFor
        \State $p_{\textrm{max}} \gets \max_{j=1, \ldots, N} \{ p_j \}$
        \State $j_{\textrm{max}} \gets \argmax_{j=1, \ldots, N} \{ p_j \}$
        \State $u \gets 1 - p_{\textrm{max}}$
        \State $y \gets s_{j_{\textrm{max}}}$
        \State \Return $(y, u)$
    \EndFunction
\end{algorithmic}
\end{algorithm}

For a conventional deep learning-based solver, a deep learning model $m$ is trained and then used to make a prediction $y = m(x)$ if given an input CAPTCHA image $x$. In this case, $y$ is the string that the model $m$ returns when the input image $x$ is fed into the model. Several types of deep learning models have been proposed for this problem that include generative adversarial networks~\citep{ye2018yet}, convolutional neural networks~\citep{noury2020deep}, attention-based encoder-decoder networks~\citep{deng3e}, and object detection-based models~\citep{ousat2024matter}.

Although experimentally accurate, these deep learning-based solvers falter on images with visual characteristics (e.g., text font or character size) unseen in the training data. When encountering these out-of-distribution samples, it is better to not give an answer than to give wrong answers and get locked out by the defense system. Additionally, most CAPTCHA systems allow users to request a new image, thus skipping an answer enables the solver to exploit this feature to switch to an easier image.

To allow this new capability in a CAPTCHA solver, we propose a simple generic uncertainty-aware CAPTCHA solver in Algorithm~\ref{alg:overall}. Our solver is equipped with a trained uncertainty-aware model $m$ and a real-valued threshold $\tau \in (0, 1]$. When given an input CAPTCHA image $x$, the model $m$ first predicts the string $y$ together with an uncertainty level $u \in [0, 1]$ via the function call $\Call{predict\_with\_uncertainty}{m, x}$. The uncertainty level $u$ is a real number indicating the extent to which the model $m$ is \emph{unconfident} (or \emph{uncertain}) that $y$ is the text string shown in the input image $x$. Note that the higher the uncertainty level $u$, the less likely that the prediction $y$ is correct. With the uncertainty level $u$, the next step is to compare it with the threshold $\tau$. If $u$ is below this threshold, the solver will return the text string $y$ as the answer for the CAPTCHA. Otherwise, it will skip $x$ and request a new CAPTCHA image if possible.

Note that for our uncertainty-aware solver above, the range of $u$ and the choice of $\tau$ depend on the specific uncertainty estimation method. Since our method relies on the uncertainty level $u$ predicted by the uncertainty-aware model $m$, a model with a good uncertainty estimation capability is essential for our method to work well. In the next section, we shall describe EnSolver, an instance of the above generic solver that uses a deep ensemble~\citep{lakshminarayanan2017simple, d2021repulsive} to obtain the uncertainty level $u$.

\subsection{EnSolver: The Deep Ensemble Solver}
\label{sec:ensolver}

EnSolver is an uncertainty-aware CAPTCHA solver that employs a deep ensemble of base CAPTCHA solvers for uncertainty estimation. Deep ensemble \citep{lakshminarayanan2017simple, d2021repulsive} is a popular uncertainty estimation approach that requires significantly less modifications to the original model architecture as well as the training and inference pipelines, as compared to other uncertainty estimation approaches such as Bayesian methods \citep{chen2014stochastic, blundell2015weight, gal2016dropout, zhang2020csgmcmc, rudner2021tractable}.

We choose deep ensemble for uncertainty estimation since it allows our approach to have a greater applicability, especially to solvers with a very complex model architecture, while not compromising the quality of the uncertainty estimates. In our experiments, we will demonstrate our approach for two such complex models, one that uses the encoder-decoder architecture combined with the attention mechanism and another that uses object detectors. For these complex models, using a Bayesian method for uncertainty estimation is unnecessarily hard and inefficient since it requires a major modification to the model architecture and training procedure \citep{blundell2015weight, gal2016dropout, ritter2018scalable, zhang2020csgmcmc}. Another reason that we choose deep ensemble for uncertainty estimation is that it allows us to develop theoretical guarantees for our solvers (see Section~\ref{sec:theory}). We must emphasize that despite the simplicity of our approach, we can achieve very high accuracy, as we will show in our experiments in Section~\ref{sec:experiment}. This is consistent with several previous works \citep{ovadia2019can, wilson2020bayesian} that showed the competitiveness of deep ensembles compared to Bayesian methods such as variational inference or Laplace approximation.

For EnSolver, the model $m$ in Algorithm~\ref{alg:overall} is an ensemble of $M$ base models $(m_1, m_2, \ldots, \allowbreak m_M)$, each of which is a conventional CAPTCHA solver. The uncertainty level $u$ for this solver will be estimated from the agreement among the base models on a given input image $x$. Throughout this paper, we will use $m$ to denote both the ensemble and the uncertainty-aware solver itself, while we use $m_i$ to denote each base solver in the ensemble. We now describe how to train this ensemble of models and how to use it for uncertainty estimation.

\minisection{Training.} Given the number of base models $M$ and a labeled training set $\mathcal{D}$, we follow~\citet{lakshminarayanan2017simple} and build the deep ensemble by training $M$ base models separately with different random initializations. This training process is illustrated in Algorithm~\ref{alg:de_train}. In general, each base model $m_i$ may have its own architecture and can be trained with its own training process. However, in most cases in practice~\citep{blundell2015weight, d2021repulsive}, we only need to use one architecture and one training process (e.g., stochastic gradient descent with cross entropy loss). To speed up the training process, we can also train the base models in parallel using different GPUs~\citep{lakshminarayanan2017simple}. Our final ensemble is the set of $M$ well-trained base models $m = (m_1, m_2, \ldots, m_M)$.

\minisection{Uncertainty Estimation.} Given a trained ensemble model $m = (m_1, m_2, \ldots, m_M)$ and any input image $x$, we compute the predicted output string $y$ and the uncertainty estimate $u$ using the $\Call{predict\_with\_uncertainty}{m, x}$ function in Algorithm \ref{alg:de_uncertainty}. More specifically, we first use each base model $m_i$ to predict a string $y_i$ on the input image $x$. Since some models may predict a similar string, especially when the string is the correct prediction, we will obtain a set $(s_1, s_2, \ldots, s_N)$ of $N$ distinct strings among the $M$ predictions. Note that the set of predictions $(y_1, y_2, \ldots, y_M)$ imposes a predictive distribution on $(s_1, s_2, \ldots, s_N)$ where the probability $p_j$ of $s_j$ is:
\begin{equation*}
p_j = \frac{| \left\{ i \in \{ 1, 2, \ldots, M \} |y_i = s_j \right\} |}{M},
\end{equation*}
which is the proportion of the number of base models that predict $s_j$. In this distribution, $p_{\textrm{max}} = \max_j p_j$ measures the agreement among the predictions of our base models. It has a high value when a large proportion of the base models give the same prediction, i.e., when the uncertainty is low. Thus, we can use $u = 1 - p_{\textrm{max}}$ as the quantification of the predictive uncertainty. Note that the value of $u$ ranges from 0 to $1-\frac{1}{M}$. The predicted string $y$ returned from our procedure is the string that has the maximum number of predictions, i.e., the one corresponding to $p_{\textrm{max}}$. An illustration of this uncertainty estimation method is depicted in Figure~\ref{fig:ensolver}.

\subsection{LEnSolver: The Limited-Skip EnSolver}
\label{sec:limskips}

Although the EnSolver in Section~\ref{sec:ensolver} can skip predictions on uncertain inputs, it would be impractical if the solver keeps skipping and does not make any prediction within a reasonable time limit. Since the purpose of a CAPTCHA solver is to gain access to a website or a system, we expect the solver to finally make a prediction when the time limit is reached regardless of the uncertainty level. Thus, for this purpose, we extend EnSolver to LEnSolver, which can be applied to the setting where we only allow a maximum number of skips.

The LEnSolver is described in Algorithm~\ref{alg:limskips}. This algorithm requires the original EnSolver $m$, which is obtained from Algorithm~\ref{alg:overall} with the uncertainty estimate in Algorithm~\ref{alg:de_uncertainty}, together with the maximum number of skips $T$. For the first $T$ iterations of the algorithm, we use $m$ to make a prediction on the current input image $x_t$ and allow the solver to skip to the next input if the uncertainty level is high. However, if the solver skips all the first $T$ inputs, then it is forced to make a prediction on the $(T+1)^{th}$ input, $x_{T+1}$, regardless of the uncertainty level. This prediction also uses the majority prediction from the base models that can be obtained by calling $\Call{predict\_with\_uncertainty}{m, x_{T+1}}$. We will write $m^T$ to denote this LEnSolver. In the next section, we will prove theoretical guarantees for both EnSolver and LEnSolver.

\begin{algorithm}[tb]
\caption{LEnSolver: Limited-Skip EnSolver}
\label{alg:limskips}
\begin{algorithmic}
    \Require Original EnSolver $m$, maximum number of skips $T$
    \ForAll{$t \in \{1, 2, \ldots, T\}$}
        \State Receive a random input image $x_t$
        \State $y_t \gets \text{predict}(m, x_t)$
        \If{$y_t \neq \text{skip}$}
            \State Return $y_t$
        \EndIf
    \EndFor
    \State Receive a random input $x_{T+1}$
    \State $(y, u) \gets \Call{predict\_with\_uncertainty}{m, x_{T+1}}$
    \State Return $y$
\end{algorithmic}
\end{algorithm}

\section{Theoretical Guarantees for EnSolver and LEnSolver}
\label{sec:theory}

The use of ensembles for uncertainty estimation in EnSolver and LEnSolver allows us to derive some novel theoretical properties for these solvers. In particular, we will establish in this section theoretical lower bounds for the right decision rate of EnSolver and the success rate of LEnSolver that can be used to explain the effectiveness of these solvers in practice. To the best of our knowledge, our paper is the first to provide such theoretical analyses for CAPTCHA solvers. In Section~\ref{sec:theory-setting}, we will describe the mathematical settings for our theory. Sections~\ref{sec:theory-ensolver} and~\ref{sec:theory-lensolver} then show the theoretical guarantees for EnSolver and LEnSolver respectively. Empirical validations of our theory are provided in Section~\ref{sec:experiment}.

\subsection{Mathematical Settings}
\label{sec:theory-setting}

Let $\mathcal{X}$ be the input domain, which is the set of all CAPTCHA images containing at most $\ell$ characters, for some integer $\ell > 0$. Let $\mathcal{S}$ be the output domain, which is the set of all possible output strings containing at most $\ell$ characters from a finite alphabet. Note that the cardinality of $\mathcal{X}$ could be infinity since $\mathcal{X}$ contains images; however, $\mathcal{S}$ is a finite set given a finite alphabet and a fixed $\ell$. A data point is a pair $(x, s_x) \in \mathcal{X} \times \mathcal{S}$, where $x \in \mathcal{X}$ is an input CAPTCHA image and $s_x \in \mathcal{S}$ is its corresponding output string. We assume the input images have a probability distribution $p(x)$ on $\mathcal{X}$ and each input $x$ has a single correct output $s_x$. We will consider the case where $\mathcal{X} = \mathcal{X}^{\text{in}} \, \cup \, \mathcal{X}^{\text{out}}$, with $\mathcal{X}^{\text{in}}$ containing in-distribution images, $\mathcal{X}^{\text{out}}$ containing out-of-distribution images, and ${ \mathcal{X}^{\text{in}} \, \cap \, \mathcal{X}^{\text{out}} = \emptyset}$. Let $\alpha$ be the proportion of the in-distribution data in the whole input domain. We have:
\begin{equation*} 
\alpha = p(x \in \mathcal{X}^{\text{in}}) = \int_{x \in \mathcal{X}^{\text{in}}} p(x) dx.
\end{equation*}

Consider the EnSolver in Section~\ref{sec:ensolver}. In this solver, we have an ensemble model ${ m = (m_1, m_2, \ldots, m_M) }$ that contains $M$ base models. Each base model $m_i$ is a function mapping $x \in \mathcal{X}$ to $m_i(x) \in \mathcal{S}$. Let $\beta_i$ denote the in-distribution accuracy of $m_i$. We have:
\begin{equation*}
\beta_i = \mathbb{E}_{x \sim p( \,.\, | \, x \in \mathcal{X}^{\text{in}})} \left[ \mathbf{1}(m_i(x) = s_x) \right] = p(m_i(x) = s_x \,|\, x \in \mathcal{X}^{\text{in}}),
\end{equation*}
where $\mathbf{1}(\cdot)$ is the indicator function.
We also let $\beta_{\text{min}}$ and $\beta_{\text{max}}$ be respectively the minimum and maximum in-distribution accuracies among the base models. That is:
\begin{align*}
\beta_{\text{min}} &= \min_{i \in \{ 1, \ldots, M \}} \beta_i, \text{ and} \\
\beta_{\text{max}} &= \max_{i \in \{ 1, \ldots, M \}} \beta_i.
\end{align*}

Since the behavior of EnSolver does not change when we adjust $\tau$ to $\tau'$ such that $M(1-\tau') = \lfloor M(1-\tau) \rfloor$, we can assume $M(1-\tau)$ is a positive integer without loss of generality. Following the convention, we assume each base model $m_i$ makes a prediction independently with each other given an input. To derive our theoretical results, we also make the following assumptions.
\begin{enumerate}
\item[(A1)] $\beta_{\text{min}} > 1/N_\mathcal{S}$, where $N_\mathcal{S}$ is the size of the output domain $\mathcal{S}$.
\item[(A2)] For any base model $m_i$ and $x\in \mathcal{X}^{\text{in}}$, the model $m_i$ predicts any incorrect string with the same probability. This assumption implies ${ p(m_i(x) = s \,|\, x\in \mathcal{X}^{\text{in}}) = (1-\beta_i)/(N_\mathcal{S}-1) }$ for all $s \ne s_x$.
\item[(A3)] For any base model $m_i$ and $x\in \mathcal{X}^{\text{out}}$, the base model $m_i$ predicts any string in $\mathcal{S}$ with the same probability. This assumption implies ${ p(m_i(x) = s \,|\, x \in \mathcal{X}^{\text{out}}) = 1/N_\mathcal{S} }$ for all $s \in \mathcal{S}$.
\end{enumerate}

We note that $N_\mathcal{S}$ is usually very large in practice, e.g., $N_\mathcal{S} = 26^5$ for a typical CAPTCHA system with CAPTCHA strings of length $5$ from an alphabet of size $26$. Thus, (A1) is a very mild assumption if all the base models are trained reasonably well. Furthermore, due to the large $N_\mathcal{S}$, the uniform assumptions in (A2) and (A3) are also reasonable for deriving useful bounds, although they may not hold in practice. In Section~\ref{sec:experiment}, we will conduct experiments to show the usefulness of our theory even with these assumptions. Now we are ready to discuss our main theoretical results.

\subsection{Theoretical Guarantee for EnSolver}
\label{sec:theory-ensolver}

We first focus on the EnSolver described in Section~\ref{sec:ensolver}, which is an instance of Algorithm~\ref{alg:overall} with the uncertainty estimation method in Algorithm~\ref{alg:de_uncertainty}. Recall that this solver can either predict an output string or skip the prediction given an input image. To measure the performance of this solver, we will analyze its right decision rate, which is defined below.

\begin{defn}
For any EnSolver $m$ and any input $x \in \mathcal{X}$, let $m(x) \in \mathcal{S} \cup \{ s_{\text{skip}} \}$ be the output of this EnSolver, where $s_{\text{skip}}$ is a special symbol indicating that the solver will skip predicting $x$. The \textbf{right decision rate} $R_{\text{rd}}(m)$ of the EnSolver $m$ is defined as:
\begin{equation*}
R_{\text{rd}}(m) =
p \left(m(x) = s_x, x \in \mathcal{X}^{\text{in}} \right) + p \left(m(x) \in \{s_x, s_{\text{skip}}\}, x \in \mathcal{X}^{\text{out}} \right),
\end{equation*}
which is the total probability that this solver would make a correct decision on the whole input domain.
\label{def:rdr}
\end{defn}

Note that this definition captures the desired behavior of an EnSolver: on an in-distribution input, we want the model to make the correct prediction, while it can either skip or make the correct prediction on an out-of-distribution input. This definition is also equivalent to the expectation that the EnSolver $m$ will make a correct decision on a random input $x \sim p(x)$. In this section, we will give a lower bound for the right decision rate $R_{\text{rd}}(m)$ of any EnSolver $m$. Our bound will depend on a novel quantity called the out-of-distribution error bound, which we define as follows.

\begin{defn}
The \textbf{out-of-distribution error bound} (OEB) of an EnSolver with ensemble size $M$, output domain size $N_{\mathcal{S}}$, and uncertainty threshold $\tau$, is defined as:
\begin{equation*}
\mathcal{E}(M, N_{\mathcal{S}}, \tau) = \begin{pmatrix}M \\ \lfloor M/2 \rfloor \end{pmatrix}\frac{1}{(N_{\mathcal{S}})^{M(1-\tau)}},
\end{equation*}
where $\begin{pmatrix}M \\ \lfloor M/2 \rfloor \end{pmatrix}$ is the binomial coefficient ``$M$ choose $\lfloor M/2 \rfloor$''.
\label{def:ceb}
\end{defn}

In practice, the OEB is usually very small. For instance, if we consider the CAPTCHA system above with output strings of length 5 and an alphabet of size 26, an EnSolver with ensemble size $M=10$ and uncertainty threshold $\tau = 0.5$ will have the OEB value ${ \mathcal{E}(10, 26^5, 0.5) = 252/26^{25} \approx 10^{-33} }$. Furthermore, as its name suggests, the OEB is an upper bound of the EnSolver's error rate on out-of-distribution data. This result is stated in Lemma~\ref{lem:oeb_out} below, with the proof given in Appendix~\ref{proof:oeb_out}.

\begin{lem}
For any EnSolver $m$ with ensemble size $M$, output domain size $N_{\mathcal{S}}$, and uncertainty threshold $\tau$, its OEB satisfies:
\begin{align*}
    \mathcal{E}(M, N_{\mathcal{S}}, \tau) > p \left( m(x) \notin \{ s_x, s_{\text{skip}} \} \,|\, x \in \mathcal{X}^{\text{out}} \right).
\end{align*}
\label{lem:oeb_out}
\end{lem}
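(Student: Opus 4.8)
The plan is to reduce the event of a wrong prediction on an out-of-distribution input to a binomial tail probability, and then control that tail with a union bound over the incorrect strings. First I would pin down exactly when EnSolver makes a wrong prediction on an input $x \in \mathcal{X}^{\text{out}}$. By Algorithm~\ref{alg:de_uncertainty}, the solver does \emph{not} skip precisely when $u = 1 - p_{\text{max}} < \tau$, i.e. when the winning vote count $k_{\max} = M p_{\max}$ satisfies $k_{\max} > M(1-\tau)$; since $M(1-\tau)$ is a positive integer by the standing assumption, this is equivalent to $k_{\max} \geq M(1-\tau) + 1$. Hence the event $\{m(x) \notin \{s_x, s_{\text{skip}}\}\}$ means that some incorrect string $s \neq s_x$ is the plurality winner and receives at least $M(1-\tau)+1$ of the $M$ votes. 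By (A3) together with the independence of the base models, on an out-of-distribution input the $M$ predictions are i.i.d. uniform on $\mathcal{S}$, so for each fixed string $s$ its vote count follows a $\text{Binomial}(M, 1/N_{\mathcal{S}})$ distribution.

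Next I would apply a union bound over the $N_{\mathcal{S}} - 1$ incorrect strings. The events ``$s$ is the plurality winner'' are disjoint across $s$ (the $\argmax$ breaks ties), and each implies ``$s$ receives at least $M(1-\tau)+1$ votes'', so
\begin{equation*}
p\left(m(x) \notin \{s_x, s_{\text{skip}}\} \mid x \in \mathcal{X}^{\text{out}}\right) \leq (N_{\mathcal{S}} - 1)\, p\big(\text{count}(s) \geq M(1-\tau)+1\big),
\end{equation*}
where by symmetry the tail probability on the right is the same for every incorrect $s$.

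It then remains to bound the binomial tail. Writing $a = M(1-\tau)$, I would estimate
\begin{equation*}
p\big(\text{count}(s) \geq a+1\big) = \sum_{j=a+1}^{M} \binom{M}{j} \frac{1}{N_{\mathcal{S}}^{j}} \left(1 - \frac{1}{N_{\mathcal{S}}}\right)^{M-j} \leq \binom{M}{\lfloor M/2 \rfloor} \sum_{j=a+1}^{M} \frac{1}{N_{\mathcal{S}}^{j}},
\end{equation*}
using the two standard facts that $\binom{M}{j} \leq \binom{M}{\lfloor M/2 \rfloor}$ for every $j$ and that $(1-1/N_{\mathcal{S}})^{M-j} \leq 1$. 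Bounding the finite sum by the full geometric series gives, strictly, $\sum_{j=a+1}^{M} N_{\mathcal{S}}^{-j} < N_{\mathcal{S}}^{-a}/(N_{\mathcal{S}} - 1)$. Substituting back and multiplying by $(N_{\mathcal{S}} - 1)$ cancels that factor and leaves exactly $\binom{M}{\lfloor M/2 \rfloor} N_{\mathcal{S}}^{-M(1-\tau)} = \mathcal{E}(M, N_{\mathcal{S}}, \tau)$, with the inequality strict thanks to the geometric-series relaxation.

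I expect the only genuinely delicate step to be the first one: correctly translating the skip rule $u < \tau$ into the integer vote threshold $k_{\max} \geq M(1-\tau)+1$, so that the exponent in the final bound comes out as $M(1-\tau)$ rather than $M(1-\tau)+1$, and verifying that ``making a wrong prediction'' really forces a wrong string to clear this threshold, which is what legitimizes the union bound. The remaining ingredients — the central-binomial-coefficient bound, dropping the factor $(1-1/N_{\mathcal{S}})^{M-j}$, and summing the geometric series — are routine, and it is precisely the cancellation of $(N_{\mathcal{S}}-1)$ that produces the clean closed form of the OEB.
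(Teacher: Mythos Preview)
Your proposal is correct and follows essentially the same route as the paper: reduce the wrong-prediction event to ``some incorrect string receives at least $M(1-\tau)+1$ votes,'' apply a union bound over the $N_{\mathcal{S}}-1$ incorrect strings, evaluate each term as a $\mathrm{Binomial}(M,1/N_{\mathcal{S}})$ tail via assumption (A3), and then control the tail with the central-binomial-coefficient bound plus a geometric-series relaxation. The only cosmetic difference is that the paper packages the last step as a standalone helper lemma, $\sum_{i=k}^{M}\binom{M}{i}N_{\mathcal{S}}^{-i} < \mathcal{E}(M,N_{\mathcal{S}},\tau)/(N_{\mathcal{S}}-1)$, which it also reuses in later proofs, whereas you inline it.
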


Given our observation that $\mathcal{E}(M, N_{\mathcal{S}}, \tau)$ is usually very small, this lemma tells us that the error rate of the EnSolver on out-of-distribution data is also very small. Interestingly, $\mathcal{E}(M, N_{\mathcal{S}}, \tau)$ also gives us an upper bound for the probability that several base models in the ensemble make the same but incorrect prediction on in-distribution data. This result is stated in the following lemma, where the proof is given in Appendix~\ref{proof:oeb_in}.

\begin{lem}
Consider any EnSolver $m$ with ensemble size $M$, output domain size $N_{\mathcal{S}}$, and uncertainty threshold $\tau$. For any $x \in \mathcal{X}$ and $s \in \mathcal{S}$, let $n(x, s)$ be the number of base models predicting the output $s$ for $x$. The OEB of $m$ satisfies:
\begin{equation*}
    \mathcal{E}(M, N_{\mathcal{S}}, \tau) > p \left( \exists s \ne s_x, ~ n(x, s) \ge M(1-\tau) + 1 \,|\, x \in \mathcal{X}^{\text{in}} \right).
\end{equation*}
\label{lem:oeb_in}
\end{lem}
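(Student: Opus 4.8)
The plan is to bound the target probability by a double union bound — first over the incorrect output strings, then over the subsets of base models that could jointly produce such a string — and finally to compare the resulting central binomial coefficient against $\binom{M}{\lfloor M/2\rfloor}$. Write $k = M(1-\tau)$, which is a positive integer with $1 \le k \le M-1$ by the standing assumptions on $\tau$, so that the event in question is $\{\exists s\ne s_x: n(x,s)\ge k+1\}$. The essential preliminary observation is that, conditioned on $x\in\mathcal{X}^{\text{in}}$, assumption (A2) makes each base model $m_i$ predict any fixed incorrect string $s\ne s_x$ with probability $q_i = (1-\beta_i)/(N_{\mathcal{S}}-1)$, and assumption (A1) forces $\beta_i\ge\beta_{\text{min}}>1/N_{\mathcal{S}}$, whence $q_i < 1/N_{\mathcal{S}}$ strictly for every $i$.

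First I would apply the union bound over the $N_{\mathcal{S}}-1$ incorrect strings to reduce to controlling, for one fixed $s\ne s_x$, the quantity $p(n(x,s)\ge k+1\mid x\in\mathcal{X}^{\text{in}})$. For this single-string tail I would avoid computing the exact (Poisson-binomial) distribution of $n(x,s)$ and instead use the inclusion $\{n(x,s)\ge k+1\}\subseteq\bigcup_{|A|=k+1}\{m_i(x)=s\ \forall i\in A\}$, where $A$ ranges over $(k+1)$-element subsets of $\{1,\dots,M\}$. By independence of the base models, each term has probability $\prod_{i\in A}q_i$, so the union bound yields the elementary symmetric polynomial $e_{k+1}(q_1,\dots,q_M)$, which is strictly less than $\binom{M}{k+1}N_{\mathcal{S}}^{-(k+1)}$ because every $q_i<1/N_{\mathcal{S}}$.

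Combining the two bounds, the target probability is at most $(N_{\mathcal{S}}-1)\binom{M}{k+1}N_{\mathcal{S}}^{-(k+1)}$, and since $N_{\mathcal{S}}-1<N_{\mathcal{S}}$ this is strictly below $\binom{M}{k+1}N_{\mathcal{S}}^{-k}$ — exactly the cancellation of one factor of $N_{\mathcal{S}}$ that matches the exponent $k=M(1-\tau)$ appearing in the definition of $\mathcal{E}$. It then remains to replace $\binom{M}{k+1}$ by the central coefficient, using the standard fact that $\binom{M}{j}$ is maximized at $j=\lfloor M/2\rfloor$, so $\binom{M}{k+1}\le\binom{M}{\lfloor M/2\rfloor}$; substituting $k=M(1-\tau)$ gives $\binom{M}{\lfloor M/2\rfloor}N_{\mathcal{S}}^{-M(1-\tau)}=\mathcal{E}(M,N_{\mathcal{S}},\tau)$, closing the chain.

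I expect the main obstacle to be handling the heterogeneity of the base models: because each $m_i$ has its own accuracy $\beta_i$, the vote count $n(x,s)$ is a sum of independent but non-identically distributed indicators, so its tail has no clean closed form. The subset-union-bound device is what sidesteps this, since it requires only the uniform per-model estimate $q_i<1/N_{\mathcal{S}}$ rather than any identical-distribution assumption; that same device also supplies the strict inequality demanded by the statement, the strictness originating entirely from (A1). Minor points to check in passing are that $k+1$ is a legal binomial index (guaranteed by $1\le k\le M-1$) and that this argument is essentially the in-distribution counterpart of Lemma~\ref{lem:oeb_out}, the only difference being the replacement of the out-of-distribution probability $1/N_{\mathcal{S}}$ by the strictly smaller in-distribution value $q_i$.
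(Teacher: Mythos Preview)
Your proof is correct. Both you and the paper open with the same union bound over the $N_{\mathcal{S}}-1$ incorrect strings and close by replacing a binomial coefficient with the central one, but you handle the single-string tail differently. The paper writes out the exact Poisson-binomial tail $\sum_{i\ge k+1}\sum_{|\kappa|=i}\prod_{j\in\kappa}q_j\prod_{t\notin\kappa}(1-q_t)$, coarsens via $\beta_{\text{min}},\beta_{\text{max}}$ to reach $\sum_{i\ge k+1}\binom{M}{i}N_{\mathcal{S}}^{-i}$, and then invokes an auxiliary geometric-series lemma (their Lemma~\ref{lem1}) to collapse that sum to $\mathcal{E}(M,N_{\mathcal{S}},\tau)/(N_{\mathcal{S}}-1)$. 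Your subset-union-bound device jumps straight to the single term $e_{k+1}(q_1,\dots,q_M)<\binom{M}{k+1}N_{\mathcal{S}}^{-(k+1)}$, which lets you cancel one power of $N_{\mathcal{S}}$ against the $(N_{\mathcal{S}}-1)$ from the outer union bound without ever summing a series. The net effect is that your argument is shorter and self-contained (no auxiliary lemma needed), while the paper's route passes through an intermediate bound $p(n(x,s)\ge k+1\mid x\in\mathcal{X}^{\text{in}})<\mathcal{E}(M,N_{\mathcal{S}},\tau)/(N_{\mathcal{S}}-1)$ that they reuse elsewhere. One small remark: strictness in your chain already follows from $N_{\mathcal{S}}-1<N_{\mathcal{S}}$ alone, so (A1) is doing a bit less work than your final paragraph suggests, though it is still what drives the estimate $q_i<1/N_{\mathcal{S}}$.
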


Since $\, p \left( \exists s \ne s_x, ~ n(x, s) \ge M(1-\tau) + 1 \,|\, x \in \mathcal{X}^{\text{in}} \right) \ge p \left( \exists s \ne s_x, m(x) = s \,|\, x \in \mathcal{X}^{\text{in}} \right)$, \allowbreak Lemma~\ref{lem:oeb_in} implies that $\mathcal{E}(M, N_{\mathcal{S}}, \tau)$ is also an upper bound of the EnSolver's wrong prediction rate on in-distribution data.\footnote{Here we differentiate between the wrong prediction rate, which is $p \left( \exists s \ne s_x, m(x) = s \,|\, x \in \mathcal{X}^{\text{in}} \right)$, and the error rate, which is $p \left( m(x) \neq s_x \,|\, x \in \mathcal{X}^{\text{in}} \right) = p \left( m(x) = s_{\text{skip}} \vee \exists s \ne s_x, m(x) = s \,|\, x \in \mathcal{X}^{\text{in}} \right)$, given an in-distribution input.} It also tells us that this wrong prediction rate on in-distribution data is usually small. Using both Lemma~\ref{lem:oeb_out} and Lemma~\ref{lem:oeb_in}, we can now prove a lower bound for the right decision rate $R_{\text{rd}}(m)$ in Theorem~\ref{thrm:rdr} below. The proof for this theorem is given in Appendix~\ref{proof:rdr}.

\begin{thrm}
Consider any EnSolver $m$ with ensemble size $M$, output domain size $N_{\mathcal{S}}$, and uncertainty threshold $\tau$. Let $\beta_{\text{min}}$ and $\beta_{\text{max}}$ be respectively the minimum and maximum in-distribution accuracies among the $M$ base models. The right decision rate of $m$ satisfies:
\begin{equation}
    R_{\text{rd}}(m) > \alpha \sum_{i = k}^{M}\begin{pmatrix} M \\ i \end{pmatrix}\beta_{\text{min}}^i \, (1-\beta_{\text{max}})^{M-i} + (1-\alpha) - \mathcal{E}(M, N_{\mathcal{S}}, \tau),
\label{eq:rdr_guarantee}
\end{equation}
where $k = M(1-\tau) + 1$ and $\alpha$ is the proportion of in-distribution data in the input domain.
\label{thrm:rdr}
\end{thrm}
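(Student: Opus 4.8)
The plan is to bound $R_{\text{rd}}(m)$ by conditioning on whether the input is in- or out-of-distribution, treating the two resulting terms separately and then recombining. By the law of total probability,
\begin{equation*}
R_{\text{rd}}(m) = \alpha \, p\!\left(m(x) = s_x \mid x \in \mathcal{X}^{\text{in}}\right) + (1-\alpha)\, p\!\left(m(x) \in \{s_x, s_{\text{skip}}\} \mid x \in \mathcal{X}^{\text{out}}\right),
\end{equation*}
so it suffices to lower bound each conditional probability. The out-of-distribution term is immediate from Lemma~\ref{lem:oeb_out}: passing to the complement,
\begin{equation*}
p\!\left(m(x) \in \{s_x, s_{\text{skip}}\} \mid x \in \mathcal{X}^{\text{out}}\right) = 1 - p\!\left(m(x) \notin \{s_x, s_{\text{skip}}\} \mid x \in \mathcal{X}^{\text{out}}\right) > 1 - \mathcal{E}(M, N_{\mathcal{S}}, \tau).
\end{equation*}

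The in-distribution term is where the real work lies. Recall the EnSolver outputs its majority string exactly when the top vote count satisfies $n_{\text{max}} \ge k$ with $k = M(1-\tau)+1$, and otherwise skips. Let $P = n(x, s_x)$ denote the number of base models voting for the correct string. First I would establish the set inclusion $\{P \ge k\} \setminus \{m(x) = s_x\} \subseteq \{\exists s \ne s_x : n(x, s) \ge k\}$: if at least $k$ models are correct but the solver does not output $s_x$, then since $n_{\text{max}} \ge P \ge k$ the solver does not skip and instead returns some other string whose vote count is at least $P \ge k$. This yields
\begin{equation*}
p\!\left(m(x) = s_x \mid x \in \mathcal{X}^{\text{in}}\right) \ge p\!\left(P \ge k \mid x \in \mathcal{X}^{\text{in}}\right) - p\!\left(\exists s \ne s_x : n(x,s) \ge k \mid x \in \mathcal{X}^{\text{in}}\right),
\end{equation*}
and the subtracted term is bounded by $\mathcal{E}(M, N_{\mathcal{S}}, \tau)$ via Lemma~\ref{lem:oeb_in}, whose event is exactly $n(x,s) \ge M(1-\tau)+1$.

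To handle $p(P \ge k \mid x \in \mathcal{X}^{\text{in}})$, I would use independence of the base models together with assumption (A2): conditioned on an in-distribution input, $P$ is a sum of independent Bernoulli indicators, hence Poisson--binomial with per-model success probabilities $\beta_i \in [\beta_{\text{min}}, \beta_{\text{max}}]$. Expanding $p(P = j)$ as a sum of $\binom{M}{j}$ products and bounding each factor by $\beta_i \ge \beta_{\text{min}}$ and $1-\beta_i \ge 1 - \beta_{\text{max}}$ gives the term-by-term estimate $p(P = j) \ge \binom{M}{j}\beta_{\text{min}}^j (1-\beta_{\text{max}})^{M-j}$, and summing over $j \ge k$ produces $p(P \ge k \mid x \in \mathcal{X}^{\text{in}}) \ge \sum_{i=k}^M \binom{M}{i}\beta_{\text{min}}^i (1-\beta_{\text{max}})^{M-i}$.

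Finally I would substitute both conditional bounds into the total-probability decomposition. The key simplification is that the two error contributions combine cleanly: $-\alpha\,\mathcal{E}(M, N_{\mathcal{S}}, \tau) - (1-\alpha)\,\mathcal{E}(M, N_{\mathcal{S}}, \tau) = -\mathcal{E}(M, N_{\mathcal{S}}, \tau)$, which is precisely why the OEB appears in~\eqref{eq:rdr_guarantee} as a single subtracted term rather than weighted by $\alpha$. The main obstacle is the in-distribution step, and specifically the Poisson--binomial reduction: one must justify that, conditioned on $x \in \mathcal{X}^{\text{in}}$, the indicators $\mathbf{1}(m_i(x)=s_x)$ may be treated as independent with the stated marginals, so that the term-by-term bound in $\beta_{\text{min}}$ and $\beta_{\text{max}}$ is valid; the remaining manipulations are routine.
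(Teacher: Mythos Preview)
Your proposal is correct and follows essentially the same approach as the paper's proof: the same total-probability decomposition, the same use of Lemma~\ref{lem:oeb_out} for the out-of-distribution term, and the same two-step in-distribution argument (lower bounding $p(n(x,s_x)\ge k)$ via the Poisson--binomial expansion with $\beta_{\text{min}},\beta_{\text{max}}$, then subtracting the event $\{\exists s\ne s_x: n(x,s)\ge k\}$ bounded by Lemma~\ref{lem:oeb_in}). Your set-inclusion phrasing $\{P\ge k\}\setminus\{m(x)=s_x\}\subseteq\{\exists s\ne s_x:n(x,s)\ge k\}$ is just a repackaging of the paper's event decomposition $p(A\wedge B)=p(A)-p(A\wedge\neg B)$, and the final recombination of the two $\mathcal{E}$-terms into a single $-\mathcal{E}(M,N_{\mathcal{S}},\tau)$ matches exactly.
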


This theorem gives us a theoretical guarantee for the right decision rate of an EnSolver. From the theorem, we see that if $\alpha$ is small (i.e., there are more out-of-distribution data), the lower bound will be dominated by $(1-\alpha)$, which will be large. On the other hand, if there are more in-distribution data (i.e., $\alpha$ is large), then the lower bound will be dominated by the sum on the right-hand side of~\eqref{eq:rdr_guarantee}, which depends on the in-distribution accuracies of the base models in the ensemble.

As an example, let us consider again the CAPTCHA system described previously. Recall that for this system, $M = 10$, $\tau = 0.5$, and $\mathcal{E}(M, N_{\mathcal{S}}, \tau) \approx 10^{-33}$. If the base models were trained so that $\beta_{\text{min}} \approx \beta_{\text{max}} \approx 0.9$, Theorem~\ref{thrm:rdr} would give us the lower bounds $R_{\text{rd}}(m) \gtrapprox 0.9998$ for $\alpha = 0.1$, $R_{\text{rd}}(m) \gtrapprox 0.9991$ for $\alpha = 0.5$, and $R_{\text{rd}}(m) \gtrapprox 0.9985$ for $\alpha = 0.9$. These bounds indicate that the EnSolver will have very high right decision rates in these settings. In Section~\ref{sec:experiment}, we will conduct experiments to validate our theoretical bounds on real data.

\subsection{Theoretical Guarantee for LEnSolver}
\label{sec:theory-lensolver}

We now focus on the LEnSolver described in Section~\ref{sec:limskips} and Algorithm~\ref{alg:limskips}. This solver always makes a prediction for an input image; thus, we will measure its performance through its success rate, which is defined below.

\begin{defn}
Consider an LEnSolver $m^T$ constructed from an original EnSolver $m$ with the maximum number of skips $T$. The \textbf{success rate} $R_{\text{s}}(m^T)$ of the LEnSolver $m^T$ is defined as the probability that $m^T$ successfully cracks the CAPTCHA system.
\label{def:success_rate}
\end{defn}

From this definition, we can write $R_{\text{s}}(m^T)$ formally by considering $(T+1)$ input images $x_1, x_2, \ldots, x_{T+1}$ that are sampled i.i.d. from $p(x)$. In this case,
\begin{align}
R_{\text{s}}(m^T) &= p \left( m(x_1) = s_{x_1} \right) + p \left( m(x_1) = s_{\text{skip}}, \, m(x_2) = s_{x_2} \right) + \ldots \notag \\
&\quad + p \left( m(x_1) = s_{\text{skip}}, \, m(x_2) = s_{\text{skip}}, \ldots, \, m(x_T) = s_{\text{skip}}, \, m(x_{T+1}) = s_{x_{T+1}} \right) \notag \\
&= \sum_{t=1}^{T+1} \left( p \left( m(x_t) = s_{x_t} \right) \prod_{i=1}^{t-1} p \left( m(x_i) = s_{\text{skip}} \right) \right).
\label{eq:success_rate}
\end{align}

Note that when actually running Algorithm~\ref{alg:limskips}, we do not need to sample all these $(T+1)$ input images if the solver decides to make a prediction before exhausting all the $T$ allowable skips. However, we need to consider all these inputs to give a formal definition for $R_{\text{s}}(m^T)$. Since $x_1, x_2, \ldots, x_{T+1}$ are i.i.d., Equation~\eqref{eq:success_rate} can be further simplified by:
\begin{align}
R_{\text{s}}(m^T) &= p \left( m(x) = s_x \right) \sum_{t=0}^T p \left( m(x) = s_{\text{skip}} \right)^t \label{eq:success_rate_1} \\
&= p \left( m(x) = s_x \right) \frac{1 - p \left( m(x) = s_{\text{skip}} \right)^{T+1}}{1 - p \left( m(x) = s_{\text{skip}} \right)}, \label{eq:success_rate_2}
\end{align}
where $x$ is a random input sample from $p(x)$. From Equation~\eqref{eq:success_rate_2}, we see that the success rate $R_{\text{s}}(m^T)$ depends on the correct prediction rate $p \left( m(x) = s_x \right)$ and the skip rate $p \left( m(x) = s_{\text{skip}} \right)$ of any random input $x$. We also observe that $R_{\text{s}}(m^T)$ is a monotonically increasing function of $T$ if $p \left( m(x) = s_{\text{skip}} \right) > 0$. Furthermore, when $T \rightarrow \infty$ (i.e., we are allowed to skip infinitely), we have:
\begin{align*}
R_{\text{s}}(m^T) \rightarrow \frac{p ( m(x) = s_x )}{1 - p ( m(x) = s_{\text{skip}} )} = p( m(x) = s_x \,|\, m(x) \neq s_{\text{skip}}).
\end{align*}

Note that $R_{\text{s}}(m^T)$ does not tend to $1$ since we can only make one single prediction. We also observe from Equation~\eqref{eq:success_rate_1} that $R_{\text{s}}(m^T)$ is high when both $p ( m(x) = s_x )$ and $p ( m(x) = s_{\text{skip}} ) $ are high. However, since 
$p ( m(x) = s_{\text{skip}} ) \le 1 - p ( m(x) = s_x )$, it is usually the case that when $p ( m(x) = s_x )$ is high, $p ( m(x) = s_{\text{skip}} )$ would be small and vice versa.

In this section, we will give a lower bound for the success rate $R_{\text{s}}(m^T)$ of any LEnSolver $m^T$. From Equations~\eqref{eq:success_rate_1} and~\eqref{eq:success_rate_2}, it is sufficient to derive lower bounds for $p ( m(x) = s_x )$ and $p ( m(x) = s_{\text{skip}} ) $ and then combine them into a lower bound for $R_{\text{s}}(m^T)$. In Lemmas~\ref{lem:correct_rate} and~\ref{lem:skip_rate} below, we give the lower bounds for these probabilities, with their proofs given in Appendices~\ref{proof:correct_rate} and~\ref{proof:skip_rate} respectively. Note that the results in these lemmas are for the original EnSolver $m$ of the LEnSolver $m^T$.

\begin{lem}
Consider any EnSolver $m$ with ensemble size $M$, output domain size $N_{\mathcal{S}}$, and uncertainty threshold $\tau$. Let $\beta_{\text{min}}$ and $\beta_{\text{max}}$ be respectively the minimum and maximum in-distribution accuracies among the $M$ base models. The correct prediction rate of $m$ satisfies:
\begin{equation*}
   p \left( m(x) = s_x \right) > \alpha \sum_{i=k}^{M}\begin{pmatrix} M \\ i \end{pmatrix}\beta_{\text{min}}^i (1-\beta_{\text{max}})^{M-i} - \alpha \, \mathcal{E}(M, N_{\mathcal{S}}, \tau),
\end{equation*}
where $k = M(1-\tau) + 1$ and $\alpha$ is the proportion of in-distribution data in the input domain.
\label{lem:correct_rate}
\end{lem}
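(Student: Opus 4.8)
The plan is to lower-bound the correct prediction rate by discarding the out-of-distribution contribution and analyzing only in-distribution inputs. Since $p(m(x)=s_x) = \alpha\, p(m(x)=s_x \mid x\in\mathcal{X}^{\text{in}}) + (1-\alpha)\, p(m(x)=s_x \mid x\in\mathcal{X}^{\text{out}})$ and the second term is nonnegative, it suffices to show that $p(m(x)=s_x \mid x\in\mathcal{X}^{\text{in}})$ exceeds $\sum_{i=k}^{M}\binom{M}{i}\beta_{\text{min}}^i(1-\beta_{\text{max}})^{M-i} - \mathcal{E}(M,N_{\mathcal{S}},\tau)$, after which multiplying through by $\alpha$ yields the claim. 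The first step is therefore to translate the event $\{m(x)=s_x\}$ into a statement about the vote counts $n(x,s)$. Inspecting Algorithm~\ref{alg:de_uncertainty}, the solver outputs $s_x$ exactly when $s_x$ is the (strict) majority string and $u = 1 - p_{\text{max}} < \tau$, i.e.\ when $n(x,s_x)/M > 1-\tau$, which (using the standing assumption that $M(1-\tau)$ is a positive integer) is equivalent to $n(x,s_x) \ge M(1-\tau)+1 = k$.

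Next I would introduce the two events $A = \{n(x,s_x) \ge k\}$ and $B = \{\exists\, s\ne s_x : n(x,s)\ge k\}$ and argue that, conditioned on $x \in \mathcal{X}^{\text{in}}$, we have $A \cap B^c \subseteq \{m(x)=s_x\}$. Indeed, on $A\cap B^c$ every competing string satisfies $n(x,s) < k \le n(x,s_x)$, so $s_x$ is the unique argmax and hence the returned string, while $p_{\text{max}} = n(x,s_x)/M \ge k/M > 1-\tau$ guarantees $u<\tau$, so the solver predicts rather than skips. A union bound then gives $p(m(x)=s_x \mid x\in\mathcal{X}^{\text{in}}) \ge p(A\mid x\in\mathcal{X}^{\text{in}}) - p(B\mid x\in\mathcal{X}^{\text{in}})$.

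It then remains to bound the two pieces. For the upper bound on $p(B\mid x\in\mathcal{X}^{\text{in}})$, I would invoke Lemma~\ref{lem:oeb_in} directly, which states precisely that $\mathcal{E}(M,N_{\mathcal{S}},\tau) > p(\exists\, s\ne s_x,\, n(x,s)\ge M(1-\tau)+1 \mid x\in\mathcal{X}^{\text{in}})$, i.e.\ $p(B\mid x\in\mathcal{X}^{\text{in}}) < \mathcal{E}(M,N_{\mathcal{S}},\tau)$. For the lower bound on $p(A\mid x\in\mathcal{X}^{\text{in}})$, I would use the independence of the base models together with assumption (A2): partitioning $A$ according to the exact set $I$ of models that vote $s_x$, the event ``exactly the models in $I$ vote $s_x$'' has probability $\prod_{j\in I}\beta_j \prod_{j\notin I}(1-\beta_j) \ge \beta_{\text{min}}^{|I|}(1-\beta_{\text{max}})^{M-|I|}$, where I loosen each success factor $\beta_j$ down to $\beta_{\text{min}}$ and each failure factor $1-\beta_j$ down to $1-\beta_{\text{max}}$. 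Summing these disjoint events over all $I$ with $|I|=i\ge k$ gives $p(A\mid x\in\mathcal{X}^{\text{in}}) \ge \sum_{i=k}^{M}\binom{M}{i}\beta_{\text{min}}^i(1-\beta_{\text{max}})^{M-i}$. Combining the two bounds and multiplying by $\alpha$ completes the proof.

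The main obstacle I anticipate is getting the event translation in the second step exactly right: checking that the threshold test demands the strict inequality $n(x,s_x)/M > 1-\tau$ (not $\ge$), that the integer threshold $k = M(1-\tau)+1$ correctly encodes it, and that $A\cap B^c$ forces $s_x$ to be the \emph{strict} argmax so the tie-breaking in the $\argmax$ of Algorithm~\ref{alg:de_uncertainty} is irrelevant. The remaining work—the union bound and the combinatorial lower bound on $p(A\mid x\in\mathcal{X}^{\text{in}})$—is routine once the disjointness of the ``exactly the set $I$ votes $s_x$'' events is noted.
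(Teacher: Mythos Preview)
Your proposal is correct and follows essentially the same route as the paper: decompose by in/out-of-distribution, drop the out-of-distribution term, and for the in-distribution part bound $p(m(x)=s_x\mid x\in\mathcal{X}^{\text{in}})$ via the event $\{n(x,s_x)\ge k\}$ minus $\{\exists\,s\ne s_x:\,n(x,s)\ge k\}$, applying Lemma~\ref{lem:oeb_in} for the latter and the combinatorial lower bound with $\beta_{\text{min}},\beta_{\text{max}}$ for the former. The paper packages the in-distribution step by citing inequality~\eqref{eq:thrm_rate_5} already derived inside the proof of Theorem~\ref{thrm:rdr} (where their auxiliary event $B$ is phrased as ``$n(x,s)<n(x,s_x)$ for all $s\ne s_x$'' rather than your ``no $s\ne s_x$ has $n(x,s)\ge k$''), but the resulting bounds coincide.
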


\begin{lem}
Consider any EnSolver $m$ with ensemble size $M$, output domain size $N_{\mathcal{S}}$, and uncertainty threshold $\tau$. Let $\beta_{\text{min}}$ and $\beta_{\text{max}}$ be respectively the minimum and maximum in-distribution accuracies among the $M$ base models. The skip rate of $m$ satisfies:
\begin{equation*}
   p \left( m(x) = s_{\text{skip}} \right) > \alpha \sum_{i=0}^{k-1}\begin{pmatrix} M \\ i \end{pmatrix}\beta_{\text{min}}^i (1-\beta_{\text{max}})^{M-i} + (1-\alpha) - \frac{N_{\mathcal{S}}-\alpha}{N_{\mathcal{S}}-1} \, \mathcal{E}(M, N_{\mathcal{S}}, \tau),
\end{equation*}
where $k = M(1-\tau) + 1$ and $\alpha$ is the proportion of in-distribution data in the input domain.
\label{lem:skip_rate}
\end{lem}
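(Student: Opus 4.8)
The plan is to split the skip rate by conditioning on whether the input is in- or out-of-distribution, writing
\begin{equation*}
p \left( m(x) = s_{\text{skip}} \right) = \alpha\, p \left( m(x) = s_{\text{skip}} \mid x \in \mathcal{X}^{\text{in}} \right) + (1-\alpha)\, p \left( m(x) = s_{\text{skip}} \mid x \in \mathcal{X}^{\text{out}} \right),
\end{equation*}
and lower-bounding each conditional skip probability separately. The observation driving both bounds is that, by Algorithm~\ref{alg:overall} with the uncertainty estimate $u = 1 - p_{\text{max}}$ of Algorithm~\ref{alg:de_uncertainty}, the solver skips exactly when $\max_{s} n(x,s) \le M(1-\tau) = k-1$; equivalently, it predicts exactly when some string receives at least $k$ votes. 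The two $(1-\alpha)$ and $\alpha$-weighted contributions will produce the two main terms of the claimed bound, and the two error contributions will combine into the single coefficient $\frac{N_{\mathcal{S}}-\alpha}{N_{\mathcal{S}}-1}$.

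For the in-distribution term, I would write the skip event as $\{n(x, s_x) \le k-1\} \cap \{\forall s \ne s_x,\ n(x,s) \le k-1\}$ and apply a union bound to its complement to obtain
\begin{equation*}
p \left( m(x) = s_{\text{skip}} \mid x \in \mathcal{X}^{\text{in}} \right) \ge p \left( n(x,s_x) \le k-1 \mid x \in \mathcal{X}^{\text{in}} \right) - p \left( \exists s \ne s_x,\ n(x,s) \ge k \mid x \in \mathcal{X}^{\text{in}} \right).
\end{equation*}
The subtracted term is strictly below $\mathcal{E}(M, N_{\mathcal{S}}, \tau)$ by Lemma~\ref{lem:oeb_in}. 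For the leading term, $n(x,s_x)$ is a sum of $M$ independent Bernoulli variables with success probabilities $\beta_i$, so replacing every $\beta_i$ by $\beta_{\text{min}}$ and every $1-\beta_i$ by $1-\beta_{\text{max}}$ in each of the $\binom{M}{i}$ products gives $p(n(x,s_x) = i) \ge \binom{M}{i}\beta_{\text{min}}^i (1-\beta_{\text{max}})^{M-i}$; summing over $i = 0, \ldots, k-1$ yields exactly the binomial sum in the statement. Multiplying through by $\alpha$ produces the first term together with a $-\alpha\,\mathcal{E}(M, N_{\mathcal{S}}, \tau)$ contribution.

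For the out-of-distribution term, I would use that the solver either skips, predicts $s_x$, or predicts some incorrect string, so
\begin{equation*}
p \left( m(x) = s_{\text{skip}} \mid x \in \mathcal{X}^{\text{out}} \right) = 1 - p \left( m(x) = s_x \mid x \in \mathcal{X}^{\text{out}} \right) - p \left( m(x) \notin \{s_x, s_{\text{skip}}\} \mid x \in \mathcal{X}^{\text{out}} \right).
\end{equation*}
The last probability is strictly below $\mathcal{E}(M, N_{\mathcal{S}}, \tau)$ by Lemma~\ref{lem:oeb_out}. The crucial step is to relate the correct-prediction probability to it: under assumption (A3) the $N_{\mathcal{S}}$ strings are exchangeable, so every particular string is predicted with equal probability on an out-of-distribution input, giving $p(m(x) = s_x \mid x \in \mathcal{X}^{\text{out}}) = \frac{1}{N_{\mathcal{S}}-1}\, p(m(x) \notin \{s_x, s_{\text{skip}}\} \mid x \in \mathcal{X}^{\text{out}})$. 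Substituting yields $p(m(x) = s_{\text{skip}} \mid x \in \mathcal{X}^{\text{out}}) > 1 - \frac{N_{\mathcal{S}}}{N_{\mathcal{S}}-1}\mathcal{E}(M, N_{\mathcal{S}}, \tau)$; multiplying by $(1-\alpha)$ and adding the $-\alpha\,\mathcal{E}$ term from the in-distribution part, the error coefficient $\alpha + (1-\alpha)\frac{N_{\mathcal{S}}}{N_{\mathcal{S}}-1}$ simplifies to $\frac{N_{\mathcal{S}}-\alpha}{N_{\mathcal{S}}-1}$, matching the statement.

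The step I expect to be the main obstacle is justifying the exact proportionality $p(m(x) = s_x \mid x \in \mathcal{X}^{\text{out}}) = \frac{1}{N_{\mathcal{S}}-1} p(m(x) \notin \{s_x, s_{\text{skip}}\} \mid x \in \mathcal{X}^{\text{out}})$, since it is what pins down the precise coefficient rather than a cruder one. The delicate point is the $\argmax$ tie-breaking in Algorithm~\ref{alg:de_uncertainty}: one must check that it does not single out $s_x$. I would resolve this by noting that under (A3) both the vote counts $n(x,\cdot)$ and the first-occurrence order used to break ties are exchangeable over $\mathcal{S}$, so the predicted string is distributed symmetrically across all of $\mathcal{S}$ and $s_x$ enjoys no special status; this symmetry is most naturally reused from the proof of Lemma~\ref{lem:oeb_out}. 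The remaining algebra combining the two error terms is routine.
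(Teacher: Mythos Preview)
Your proposal is correct and, for the in-distribution term and the final combination, follows the paper essentially line for line. The one genuine difference is how you handle the out-of-distribution skip probability.

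The paper does not split $1 - p(\text{skip}\mid\mathcal{X}^{\text{out}})$ into correct versus incorrect predictions. Instead it works directly at the level of vote counts: it writes $p(m(x)=s_{\text{skip}}\mid\mathcal{X}^{\text{out}}) = 1 - p(\exists s\in\mathcal{S},\ n(x,s)\ge k\mid\mathcal{X}^{\text{out}})$, applies a union bound over \emph{all} $N_{\mathcal{S}}$ strings (including $s_x$), and reuses the per-string estimate from inside the proof of Lemma~\ref{lem:oeb_out} (namely $p(n(x,s)\ge k\mid\mathcal{X}^{\text{out}}) < \mathcal{E}/(N_{\mathcal{S}}-1)$) to get $1 - \tfrac{N_{\mathcal{S}}}{N_{\mathcal{S}}-1}\mathcal{E}$. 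Your route instead treats Lemma~\ref{lem:oeb_out} as a black box and recovers the missing $1/(N_{\mathcal{S}}-1)$ factor through an exchangeability argument under (A3), which forces you to reason about tie-breaking in the $\argmax$. Both arrive at exactly the same inequality; the paper's version is a bit cleaner since it never touches the tie-breaking issue, while yours has the virtue of using Lemma~\ref{lem:oeb_out} only through its statement. Your remark that the symmetry ``is most naturally reused from the proof of Lemma~\ref{lem:oeb_out}'' is off, though---that proof is a straight union bound with no symmetry step, so your exchangeability argument stands on its own.
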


Combining Lemmas~\ref{lem:correct_rate} and~\ref{lem:skip_rate} with Equations~\eqref{eq:success_rate_1} and~\eqref{eq:success_rate_2}, it is straightforward to show Theorem~\ref{thrm:success} below that gives a lower bound for the success rate $R_{\text{s}}(m^T)$ of an LEnSolver.

\begin{thrm}
Consider any LEnSolver $m^T$ with an original EnSolver $m$, the maximum number of skips $T$, ensemble size $M$, output domain size $N_{\mathcal{S}}$, and uncertainty threshold $\tau$. Let $\beta_{\text{min}}$ and $\beta_{\text{max}}$ be respectively the minimum and maximum in-distribution accuracies among the $M$ base models. The success rate of $m^T$ satisfies:
\begin{equation*}
    R_{\text{s}}(m^T) > \gamma \, \frac{1-\rho^{T+1}}{1-\rho},
\end{equation*}
where $\displaystyle \gamma = \alpha \sum_{i=k}^{M}\begin{pmatrix} M \\ i \end{pmatrix}\beta_{\text{min}}^i (1-\beta_{\text{max}})^{M-i} - \alpha \, \mathcal{E}(M, N_{\mathcal{S}}, \tau)$, \\[3pt]
${\hskip 1.1cm} \displaystyle \rho = \alpha \sum_{i=0}^{k-1}\begin{pmatrix} M \\ i \end{pmatrix}\beta_{\text{min}}^i (1-\beta_{\text{max}})^{M-i} + (1-\alpha) - \frac{N_{\mathcal{S}}-\alpha}{N_{\mathcal{S}}-1} \, \mathcal{E}(M, N_{\mathcal{S}}, \tau),$ \\[8pt]
${\hskip 1.1cm} k = M(1-\tau) + 1$, and $\alpha$ is the proportion of in-distribution data in the input domain.
\label{thrm:success}
\end{thrm}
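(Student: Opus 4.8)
The plan is to reduce the theorem to the two lower bounds already established together with a single monotonicity observation, exactly as the ``straightforward'' remark preceding the statement suggests. Starting from the simplified closed form in \eqref{eq:success_rate_1}, I would abbreviate the correct-prediction rate by $a := p(m(x) = s_x)$ and the skip rate by $b := p(m(x) = s_{\text{skip}})$, so that $R_{\text{s}}(m^T) = a\,\phi(b)$ where $\phi(z) := \sum_{t=0}^{T} z^t = \frac{1 - z^{T+1}}{1 - z}$ for $z \ne 1$. Lemma~\ref{lem:correct_rate} gives $a > \gamma$ and Lemma~\ref{lem:skip_rate} gives $b > \rho$, with $\gamma$ and $\rho$ the quantities named in the theorem. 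The goal is then simply to establish $a\,\phi(b) > \gamma\,\phi(\rho)$, which is the asserted bound once $\phi(\rho)$ is written in closed form.

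The analytic heart of the argument is that $\phi$ is strictly positive and strictly increasing on $[0, \infty)$: it is a finite sum of nonnegative monomials whose constant term is $1$, so $\phi(z) \ge 1$ for $z \ge 0$ and $\phi'(z) = \sum_{t=1}^{T} t\,z^{t-1} > 0$ for $z > 0$. I would first record that $a$ and $b$ are probabilities, hence lie in $[0,1]$, and that the lower bound $\rho$ satisfies $0 \le \rho < 1$. The upper estimate $\rho < 1$ deserves a one-line check: by the binomial identity $\sum_{i=0}^{M}\binom{M}{i}\beta_{\text{min}}^{i}(1-\beta_{\text{max}})^{M-i} = (\beta_{\text{min}} + 1 - \beta_{\text{max}})^{M} \le 1$ (using $\beta_{\text{min}} \le \beta_{\text{max}}$), the partial sum defining $\rho$ is strictly below $1$, so that $\alpha\cdot(\text{partial sum}) + (1-\alpha) \le 1$ and subtracting the positive OEB term keeps $\rho < 1$. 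This guarantees that $\phi(\rho)$ is well-defined and equals $\frac{1 - \rho^{T+1}}{1 - \rho}$, while monotonicity gives $\phi(b) > \phi(\rho) \ge 1 > 0$.

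Finally I would chain the two strict inequalities into the product. If $a > 0$, then $a\,\phi(b) > a\,\phi(\rho)$ (multiplying $\phi(b) > \phi(\rho)$ by $a>0$) and $a\,\phi(\rho) > \gamma\,\phi(\rho)$ (multiplying $a > \gamma$ by $\phi(\rho) > 0$), so transitivity yields $R_{\text{s}}(m^T) = a\,\phi(b) > \gamma\,\phi(\rho)$; if $a = 0$ then $\gamma < a = 0$ forces $\gamma\,\phi(\rho) < 0 = R_{\text{s}}(m^T)$, so the bound still holds. Substituting $\phi(\rho) = \frac{1-\rho^{T+1}}{1-\rho}$ gives the theorem. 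The only genuine obstacle is this last sign bookkeeping: turning two strict inequalities into a product inequality is valid only when the surviving right-hand factor is strictly positive, which is why establishing $\phi(\rho) \ge 1 > 0$ and treating the degenerate case $a = 0$ separately matters; every other ingredient (the geometric-sum monotonicity and the range $0 \le \rho < 1$) is routine.
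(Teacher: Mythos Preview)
Your proposal is correct and matches the paper's approach: the paper does not spell out a proof, merely stating that combining Lemmas~\ref{lem:correct_rate} and~\ref{lem:skip_rate} with Equations~\eqref{eq:success_rate_1}--\eqref{eq:success_rate_2} makes the result straightforward, which is exactly the combination you carry out. Your treatment in fact supplies more rigor than the paper records (the check that $\rho<1$, the monotonicity of $\phi$, and the sign bookkeeping when chaining two strict inequalities into a product).
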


To give a simple illustration for this theorem, we can consider the example in Section~\ref{sec:theory-ensolver}. If we let $\alpha = 0.5$ (i.e., half of the data are out-of-distribution), the success rate of LEnSolver is at least 0.75 if we allow only one skip. This lower bound will increase to 0.93 if we allow three skips and to 0.98 if we allow five skips. In Section~\ref{sec:experiment}, we will conduct experiments on real data to validate this theorem.

\section{Experiments}
\label{sec:experiment}

In this section, we conduct experiments to evaluate the performance of our approaches as well as the usefulness of our theoretical results in practice. We first describe our experiment settings in Section~\ref{sec:exp-setting}. We then evaluate the right decision rates of EnSolver models in Section~\ref{sec:ensolver_exp} and the success rates of LEnSolver models in Section~\ref{sec:lensolver_exp}. We investigate the effects of the maximum number of skips on the success rates of LEnSolvers in Section~\ref{sec:maxskip_exp}. Finally, we discuss some limitations and potential improvements for our theoretical results in Section~\ref{sec:discuss}.

\begin{figure}[tb]
    \centering
    \includegraphics[width=0.7\textwidth]{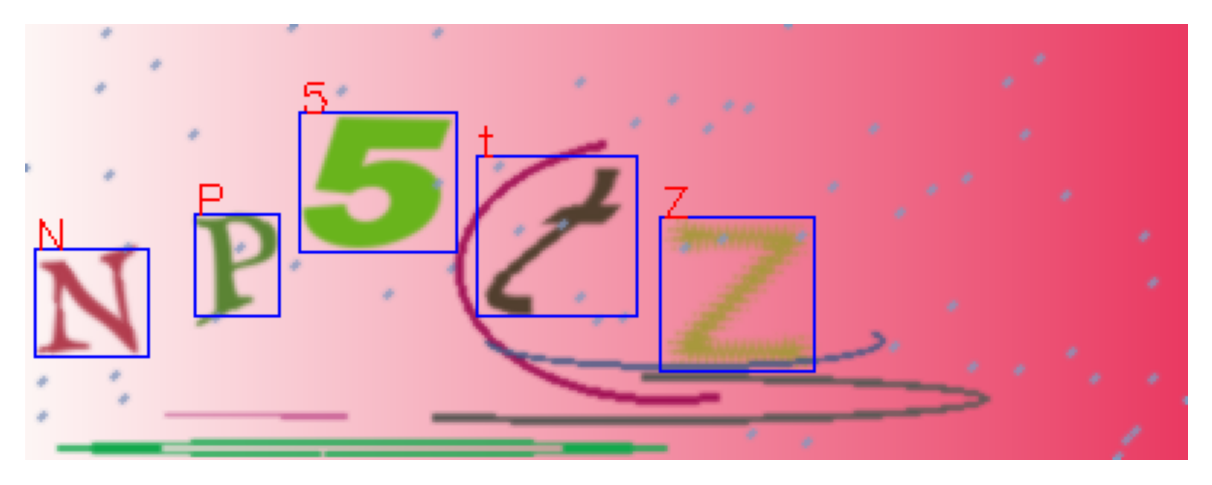}
    \caption{A sample CAPTCHA image in our new dataset. The ground truth label consists of the bounding boxes of each character, the correct letter for each character, and the correct output string NP5tZ.}
    \label{fig:od}
\end{figure}

\subsection{Experiment Settings}
\label{sec:exp-setting}

\minisection{Base Models and Baselines.} We consider two types of base models for our experiments. The first type is the {\bf 3E-Solver} proposed by~\cite{deng3e}, which is a semi-supervised end-to-end solver that uses the encoder-decoder architecture and attention mechanism. The second type of base models used in our experiments is the end-to-end object detection-based solver (named {\bf OD-Solver} in our experiments) proposed by~\cite{ousat2024matter} that uses the YOLOX model~\citep{ge2021yolox} to locate and classify each character in a given input image. These two model types are chosen due to their flexibility and state-of-the-art performance for solving text-based CAPTCHAs. In our experiments, we also use these two models (without ensembling) as our baselines.

\minisection{New In-distribution Dataset.} To train the above object detection models for solving CAPTCHAs, we follow~\citet{ousat2024matter} and generate a new dataset that contains both the bounding box location and the label for each character in the input images. We ensure our dataset is challenging for automatic solvers but still reasonable for human users by carefully generating the background colors, character locations, and noise patterns in the images. The process is as follows.

To generate the background for each image, we first randomize two different RGB triplets for its left and right-most columns. Then we interpolate these two RGB triplets horizontally to obtain the colors at the internal columns while keeping the colors constant on each vertical column. After generating a background image, we randomize a ground truth string containing alphanumeric characters with a random length from 5 to 9. Each of these characters is then converted into an image with random facecolor, font and size, together with a perspective transform to introduce character distortion. Next, the distorted characters are pasted into the background following the ground truth string order with random spacing between them. The position of each character determines its ground truth bounding box, which is combined with the ground truth string and then stored as the ground truth label.

The image obtained from the previous process can already be used as a CAPTCHA, but we can still enhance its resistance against automatic solvers by adding random noises to the image. Specifically, we draw several random curves and dots with various colors into the image where the curves can strike through the characters. The number of curves and dots as well as their thickness are chosen so that they significantly increase the CAPTCHA complexity while keeping the characters on the image recognizable by humans. An example of our generated CAPTCHAs together with the ground truth label are shown in Figure~\ref{fig:od}. For our experiments, we generate 10,000 such CAPTCHA images to use as in-distribution data for model training and another 1,000 images for testing.

\minisection{Out-of-distribution Data.} For out-of-distribution data, we follow~\citet{deng3e} and use publicly available CAPTCHAs from 8 different popular websites (see Figure~\ref{fig:datasets}(b-i) for their names and examples). From these examples, we can see that they have different visual features, background, and noise patterns compared to those of our generated dataset. Thus, it is suitable to treat them as out-of-distribution samples for our experiments. Since the label strings of these public datasets are case-insensitive, we convert all predictions and ground truth label strings to all-lowercase strings before computing the results. For the Microsoft scheme, we concatenate the strings on two lines into a single output string for both prediction and scoring purposes. For each out-of-distribution scheme, we randomly select 1,000 labeled samples that can be used for testing.

\begin{figure}[tb]
    \centering
    \begin{tabular}{ccc}
        \subfloat[Our dataset (in-distribution)]{\includegraphics[width=0.3\textwidth, height=1.7cm]{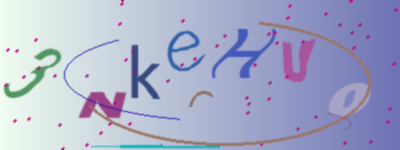}} &
        \subfloat[Apple]{\includegraphics[width=0.3\textwidth, height=1.7cm]{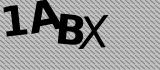}} &
        \subfloat[Ganji]{\includegraphics[width=0.3\textwidth, height=1.7cm]{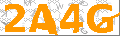}} \\
        \subfloat[Google]{\includegraphics[width=0.3\textwidth, height=1.7cm]{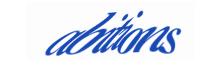}} &
        \subfloat[Microsoft]{\includegraphics[width=0.3\textwidth, height=1.7cm]{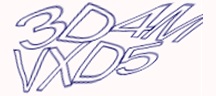}} &
        \subfloat[Sina]{\includegraphics[width=0.3\textwidth, height=1.7cm]{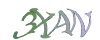}} \\
        \subfloat[Weibo]{\includegraphics[width=0.3\textwidth, height=1.7cm]{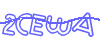}} &
        \subfloat[Wikipedia]{\includegraphics[width=0.3\textwidth, height=1.7cm]{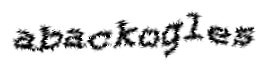}} &
        \subfloat[Yandex]{\includegraphics[width=0.3\textwidth, height=1.7cm]{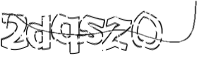}}
    \end{tabular}
    \caption{Our generated dataset (a) and public datasets (b)-(i) used in our experiments. Each dataset has unique visual features.}
    \label{fig:datasets}
\end{figure}

\minisection{Base Models Training.} Since the 3E-Solver was originally developed for semi-supervised learning~\citep{deng3e}, we train each base model of this type using our in-distribution training set and 10 additional unlabeled samples from each out-of-distribution scheme. Thus, each 3E-Solver base model has access to 80 additional unlabeled out-of-distribution samples. We conduct the experiment with the default settings used in the original 3E-Solver paper and train each model for 400 epochs with a learning rate of 0.01. To make each base model applicable to every scheme, we create a unified vocabulary dictionary that includes lowercase and uppercase characters as well as digits to be used for encoding and decoding the texts. This enables us to use the same architecture for different schemes with any set of characters.

For the object detection base models, we implement YOLOX~\citep{ge2021yolox} using the {\tt mmdetection} library~\citep{mmdetection}. The detectors are trained to detect characters from 62 different classes, including 52 English alphabet letters (both lowercase and uppercase) and 10 digits. Each base model is trained for 10 epochs using the Nesterov SGD optimizer \citep{nesterov1983method} with learning rate 0.01 and momentum 0.9. During the training, we save the model checkpoints for every epoch and use the checkpoint with the best mAP on the test set as our base model. Since the out-of-distribution datasets do not have ground truth bounding box information, they cannot be used to train the objection detection models. Thus, we only train the object detection base models using the in-distribution training set.

\minisection{Ensemble Model.} We shall evaluate our methods with ensembles having $M \in \{ 2, 6, 10 \}$ and $\tau \in \{ 0.3, 0.5, 0.7\}$. Note that in order to have non-zero uncertainty, we need at least 2 models in our ensemble. Furthermore, previous work on deep ensembles \citep{lakshminarayanan2017simple} showed that 10 base models are often enough to give a decent uncertainty estimation in most cases while not significantly increasing the time complexity to train the ensemble. The uncertainty threshold values $\tau$  are chosen to give a reasonable balance for our uncertainty estimates.

\subsection{Right Decision Rates of EnSolvers}
\label{sec:ensolver_exp}

\begin{table}[!t]
\centering
\resizebox{0.89\textwidth}{!}{%
\begin{tabular}{ccc cccccc}
\toprule
\multicolumn{3}{c}{Model setting} & \multicolumn{6}{c}{Proportion of in-distribution data} \\
\cmidrule(lr){1-3} \cmidrule(lr){4-9}
Base type & M & $\tau$ & $\alpha=0$ & $\alpha=0.2$ & $\alpha=0.4$ & $\alpha=0.6$ & $\alpha=0.8$ & $\alpha=1$ \\
\midrule
\multirow{16}{*}{3E-Solver} & 1 & & 0.000 & 0.147 & 0.294 & 0.440 & 0.587 & 0.734 \\
& \multicolumn{2}{c}{(baseline)} & (na) & (na) & (na) & (na) & (na) & (na) \\
\cmidrule(lr){2-9}
& 2 & 0.5 & 0.999 & 0.930 & 0.861 & 0.793 & 0.724 & 0.655 \\
& & & (1.000) & (0.908) & (0.816) & (0.723) & (0.631) & (0.539) \\
\cmidrule(lr){2-9}
& 6 & 0.3 & {\bf 1.000}$^*$ & 0.934 & 0.868 & 0.802 & 0.736 & 0.670 \\
& & & (1.000) & (0.883) & (0.766) & (0.650) & (0.533) & (0.416) \\[3pt]
& 6 & 0.5 & {\bf 1.000}$^*$ & 0.945 & 0.890 & 0.836 & 0.781 & 0.726 \\
& & & (1.000) & (0.933) & (0.866) & (0.799) & (0.732) & (0.665) \\[3pt]
& 6 & 0.7 & 0.993 & 0.949 & 0.906 & 0.863 & 0.819 & 0.776 \\
& & & (1.000) & (0.963) & (0.925) & (0.888) & (0.850) & (0.813) \\
\cmidrule(lr){2-9}
& 10 & 0.3 & {\bf 1.000}$^*$ & 0.931 & 0.861 & 0.792 & 0.722 & 0.653 \\
& & & (1.000) & (0.858) & (0.715) & (0.573) & (0.430) & (0.288) \\[3pt]
& 10 & 0.5 & {\bf 1.000}$^*$ & 0.949 & 0.898 & 0.848 & 0.797 & 0.746 \\
& & & (1.000) & (0.908) & (0.816) & (0.724) & (0.632) & (0.541) \\[3pt]
& 10 & 0.7 & {\bf 1.000}$^*$ & 0.958$^*$ & 0.915$^*$ & 0.873$^*$ & 0.830$^*$ & 0.788$^*$ \\
& & & (1.000) & (0.919) & (0.838) & (0.757) & (0.677) & (0.596) \\
\midrule
\multirow{16}{*}{OD-Solver} & 1 & & 0.001 & 0.161 & 0.321 & 0.481 & 0.641 & 0.801 \\
& \multicolumn{2}{c}{(baseline)} & (na) & (na) & (na) & (na) & (na) & (na) \\
\cmidrule(lr){2-9}
& 2 & 0.5 & 0.988 & 0.942 & 0.897 & 0.851 & 0.805 & 0.759 \\
& & & (1.000) & (0.928) & (0.857) & (0.785) & (0.713) & (0.641) \\
\cmidrule(lr){2-9}
& 6 & 0.3 & 0.999$^*$ & 0.952 & 0.904 & 0.857 & 0.809 & 0.762 \\
& & & (1.000) & (0.914) & (0.828) & (0.741) & (0.656) & (0.569) \\[3pt]
& 6 & 0.5 & 0.998 & 0.960 & 0.923 & 0.885 & 0.847 & 0.809 \\
& & & (1.000) & (0.952) & (0.904) & (0.855) & (0.807) & (0.759) \\[3pt]
& 6 & 0.7 & 0.965 & 0.936 & 0.908 & 0.880 & 0.852 & 0.824 \\
& & & (1.000) & (0.965) & (0.931) & (0.896) & (0.862) & (0.827) \\
\cmidrule(lr){2-9}
& 10 & 0.3 & 0.999$^*$ & 0.953 & 0.906 & 0.859 & 0.813 & 0.766 \\
& & & (1.000) & (0.894) & (0.789) & (0.683) & (0.578) & (0.472) \\[3pt]
& 10 & 0.5 & 0.999$^*$ & {\bf 0.962}$^*$ & 0.924 & 0.887 & 0.850 & 0.813 \\
& & & (1.000) & (0.928) & (0.857) & (0.785) & (0.713) & (0.642) \\[3pt]
& 10 & 0.7 & 0.994 & {\bf 0.962}$^*$ & {\bf 0.930}$^*$ & {\bf 0.898}$^*$ & {\bf 0.866}$^*$ & {\bf 0.834}$^*$ \\
& & & (1.000) & (0.932) & (0.863) & (0.795) & (0.726) & (0.658) \\
\bottomrule
\end{tabular}
}
\caption{Right decision rates of EnSolver with different model and data settings. Bold numbers indicate the best values in each column. Asterisks ($^*$) indicate the best values among those with the same base model in each column. Numbers in parentheses are the theoretical rates obtained from Theorem~\ref{thrm:rdr}, with ``na'' indicating values that do not exist. The theoretical rates are computed using the hyper-parameter values in Table~\ref{tab:hyper}.}
\label{tab:ensolver}
\end{table}

\begin{table}[t]
\centering
\begin{tabular}{c ccc ccc}
\toprule
\multirow{2}{*}{Hyper-parameter} & \multicolumn{3}{c}{3E-Solver} & \multicolumn{3}{c}{OD-Solver} \\
\cmidrule(lr){2-4} \cmidrule(lr){5-7}
& $M=2$ & $M=6$ & $M=10$ & $M=2$ & $M=6$ & $M=10$ \\
\midrule
$N_{\mathcal{S}}$ & \multicolumn{6}{c}{$2.9\times 10^{12}$ (approx.)} \\
$\beta_{\text{min}}$ & 0.734 & 0.715 & 0.698 & 0.801 & 0.788 & 0.780 \\
$\beta_{\text{max}}$ & 0.739 & 0.748 & 0.748 & 0.819 & 0.819 & 0.821 \\
\bottomrule
\end{tabular}
\caption{Values of the hyper-parameters for computing the theoretical bounds in Table~\ref{tab:ensolver} and Table~\ref{tab:lensolver}.}
\label{tab:hyper}
\end{table}

In this experiment, we evaluate the right decision rates (defined in Definition~\ref{def:rdr}) of the EnSolver models. We measure the right decision rate by computing the proportion of right decisions a model makes on a test dataset containing both in-distribution and out-of-distribution samples. To give insights into the behavior of EnSolver, we vary the proportion of in-distribution data $\alpha \in \{ 0, 0.2, 0.4, 0.6, 0.8, 1 \}$ during testing. We compute both the empirical right decision rates as well as the theoretical rates obtained from Theorem~\ref{thrm:rdr} for comparison. The results of this experiment are reported in Table~\ref{tab:ensolver}.

From this table, we observe that EnSolver models are consistently better than the baselines when there are out-of-distribution data ($\alpha < 1$). Generally, $M=10$ and $\tau=0.7$ yield the best ensembles for both model types, with the OD-Solver ensembles achieving better rates than the 3E-Solver counterparts. In contrast to the baselines, the rates of the EnSolvers decay when $\alpha$ increases. This behavior is expected since the ensembles may skip some in-distribution data, which would be counted as incorrect decisions. We can also observe this effect when $\alpha$ is high and $\tau$ is decreased. In this case, the EnSolvers skip more in-distribution samples due to the low uncertainty threshold. When comparing the empirical rates with their theoretical values, the latter are generally good indicators of the former. This confirms the usefulness of Theorem~\ref{thrm:rdr} in predicting the actual right decision rates of EnSolvers.

\subsection{Success Rates of LEnSolvers}
\label{sec:lensolver_exp}

\begin{table}[!t]
\centering
\resizebox{0.89\textwidth}{!}{%
\begin{tabular}{ccc cccccc}
\toprule
\multicolumn{3}{c}{Model setting} & \multicolumn{6}{c}{Proportion of in-distribution data} \\
\cmidrule(lr){1-3} \cmidrule(lr){4-9}
Base type & M & $\tau$ & $\alpha=0$ & $\alpha=0.2$ & $\alpha=0.4$ & $\alpha=0.6$ & $\alpha=0.8$ & $\alpha=1$ \\
\midrule
\multirow{16}{*}{3E-Solver} & 1 & & 0.000$^*$ & 0.147 & 0.294 & 0.440 & 0.587 & 0.734 \\
& \multicolumn{2}{c}{(baseline)} & (na) & (na) & (na) & (na) & (na) & (na) \\
\cmidrule(lr){2-9}
& 2 & 0.5 & 0.000$^*$ & 0.427 & 0.671 & 0.797 & 0.850 & 0.871 \\
& & & (0.000) & (0.365) & (0.617) & (0.783) & (0.885) & (0.941) \\
\cmidrule(lr){2-9}
& 6 & 0.3 & 0.000$^*$ & 0.443 & 0.695 & 0.824 & 0.875$^*$ & 0.896 \\
& & & (0.000) & (0.278) & (0.463) & (0.578) & (0.644) & (0.677) \\[3pt]
& 6 & 0.5 & 0.000$^*$ & 0.459 & 0.706 & 0.821 & 0.861 & 0.875 \\
& & & (0.000) & (0.411) & (0.634) & (0.739) & (0.776) & (0.784) \\[3pt]
& 6 & 0.7 & 0.000$^*$ & 0.465 & 0.698 & 0.790 & 0.818 & 0.827 \\
& & & (0.000) & (0.481) & (0.709) & (0.795) & (0.815) & (0.817) \\
\cmidrule(lr){2-9}
& 10 & 0.3 & 0.000$^*$ & 0.438 & 0.691 & 0.820 & 0.875$^*$ & 0.897$^*$ \\
& & & (0.000) & (0.187) & (0.302) & (0.368) & (0.401) & (0.414) \\[3pt]
& 10 & 0.5 & 0.000$^*$ & 0.469 & 0.715$^*$ & 0.825$^*$ & 0.862 & 0.869 \\
& & & (0.000) & (0.325) & (0.487) & (0.553) & (0.572) & (0.574) \\[3pt]
& 10 & 0.7 & 0.000$^*$ & 0.477$^*$ & 0.715$^*$ & 0.811 & 0.838 & 0.839 \\
& & & (0.000) & (0.352) & (0.519) & (0.582) & (0.597) & (0.598) \\
\midrule
\multirow{16}{*}{OD-Solver} & 1 & & 0.001 & 0.161 & 0.321 & 0.481 & 0.641 & 0.801 \\
& \multicolumn{2}{c}{(baseline)} & (na) & (na) & (na) & (na) & (na) & (na) \\
\cmidrule(lr){2-9}
& 2 & 0.5 & 0.002 & 0.469 & 0.720 & 0.836 & 0.879 & 0.891 \\
& & & (0.000) & (0.418) & (0.680) & (0.830) & (0.906) & (0.937) \\
\cmidrule(lr){2-9}
& 6 & 0.3 & 0.001 & 0.480 & 0.739 & 0.857 & 0.898 & 0.911 \\
& & & (0.000) & (0.364) & (0.580) & (0.695) & (0.747) & (0.764) \\[3pt]
& 6 & 0.5 & 0.002 & 0.493 & 0.742 & 0.847 & 0.877 & 0.884 \\
& & & (0.000) & (0.458) & (0.689) & (0.784) & (0.812) & (0.815) \\[3pt]
& 6 & 0.7 & {\bf 0.004}$^*$ & 0.473 & 0.711 & 0.813 & 0.847 & 0.857 \\
& & & (0.000) & (0.488) & (0.720) & (0.806) & (0.826) & (0.828) \\
\cmidrule(lr){2-9}
& 10 & 0.3 & 0.002 & 0.484 & 0.744 & {\bf 0.861}$^*$ & {\bf 0.901}$^*$ & {\bf 0.913}$^*$ \\
& & & (0.000) & (0.295) & (0.460) & (0.540) & (0.571) & (0.579) \\[3pt]
& 10 & 0.5 & 0.003 & {\bf 0.497}$^*$ & {\bf 0.747}$^*$ & 0.851 & 0.881 & 0.890 \\
& & & (0.000) & (0.381) & (0.564) & (0.634) & (0.651) & (0.652) \\[3pt]
& 10 & 0.7 & {\bf 0.004}$^*$ & 0.495 & 0.736 & 0.830 & 0.855 & 0.860 \\
& & & (0.000) & (0.388) & (0.572) & (0.641) & (0.657) & (0.658) \\
\bottomrule
\end{tabular}
}
\caption{Success rates of LEnSolver with $T=3$ for different model and data settings. Bold numbers indicate the best values in each column. Asterisks ($^*$) indicate the best values among those with the same base model in each column. Numbers in parentheses are the theoretical rates obtained from Theorem~\ref{thrm:success}, with ``na'' indicating values that do not exist. The theoretical rates are computed using the hyper-parameter values in Table~\ref{tab:hyper}.}
\label{tab:lensolver}
\end{table}

We now focus on the LEnSolver models and evaluate their success rates (defined in Definition~\ref{def:success_rate}) by simulating 400,000 actual CAPTCHA cracking scenarios, where the CAPTCHAs are sampled from both in-distribution and out-of-distribution test sets with different $\alpha$ values. In this experiment, we consider LEnSolvers with the maximum number of skips $T=3$. We report both the empirical success rates and the theoretical rates obtained from Theorem~\ref{thrm:success} in Table~\ref{tab:lensolver}. We note that the baselines do not have the skip mechanism and will make a prediction immediately on any given CAPTCHA. Thus, their success rates are exactly the same as their right decision rates in Section~\ref{sec:ensolver_exp}.

From Table~\ref{tab:lensolver}, we can observe that the LEnSolvers are consistently better than the baselines in terms of success rates. The OD-Solver ensembles are generally better than the 3E-Solver counterparts, with ensembles of size $M=10$ achieving the best rates among ensembles with the same base model type. When comparing ensembles of the same size, the optimal value of $\tau$ depends on the ensemble size and the value of $\alpha$, with $\tau = 0.5$ giving a reasonable performance in all cases. We also observe that the success rates of LEnSolvers increase with $\alpha$. This behavior is expected since lower values of $\alpha$ mean a model will likely encounter out-of-distribution CAPTCHAs and it will be forced to make a prediction after reaching the maximum number of skips. However, even with $\alpha$ as low as $0.4$, LEnSolvers can still achieve decent success rates relatively to $\alpha = 1$. For the LEnSolvers, the theoretical rates are also reasonable indicators of the empirical rates, which confirms the usefulness of our Theorem~\ref{thrm:success}.

\subsection{Effects of Maximum Number of Skips}
\label{sec:maxskip_exp}

This experiment investigates the effects of the maximum number of skips $T$ on the success rates of LEnSolvers. For this purpose, we fix $M=10$, $\tau=0.5$ and plot the empirical and theoretical success rates of LEnSolver with $T \in \{ 0, 1, \ldots, 5 \}$. Figure~\ref{fig:lensoler_rate} shows the plots for the two base model types and $\alpha \in \{ 0.2, 0.5, 0.8 \}$. From the figure, we can observe that the empirical success rates increase with $T$, and our theoretical rates correctly reflect this trend. Furthermore, the theoretical lines also correctly capture the dynamics of the corresponding empirical lines (e.g., both lines plateau when $T \ge 2$ and $\alpha=0.8$), although the theoretical lines are not very tight to the empirical lines, especially for larger $T$. This is because we derive our bounds using $\beta_{\text{min}}$ and $\beta_{\text{max}}$ instead of all $\beta_i$'s, leading to relatively loose bounds for the skip rate $p \left( m(x) = s_{\text{skip}} \right)$ in Lemma~\ref{lem:skip_rate} and the final success rate in Theorem~\ref{thrm:success}. Nevertheless, the theoretical rates are still reasonable lower bounds of the actual success rates, which are much higher in practice.

\begin{figure}[tb]
    \centering
    \begin{tabular}{ccc}
        \subfloat[$\alpha=0.2$]{\includegraphics[width=0.31\textwidth]{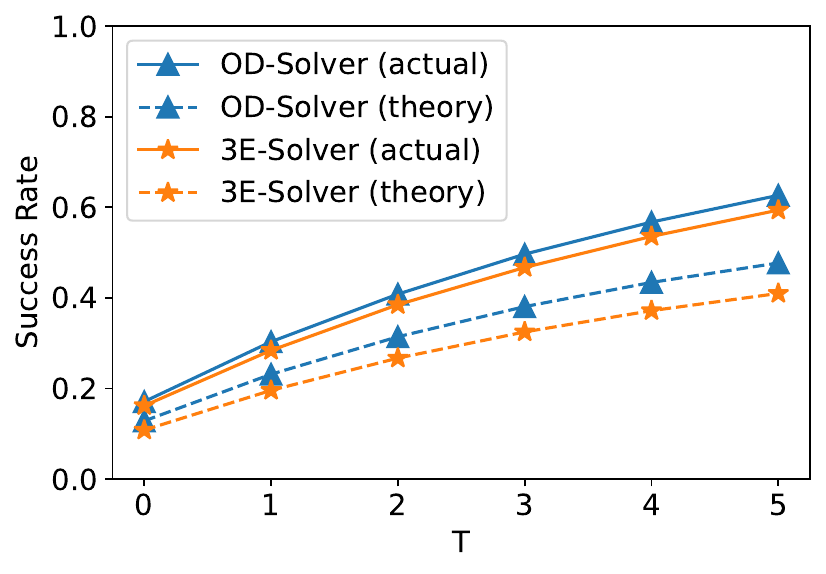}}
        \hspace{1mm}
        \subfloat[$\alpha=0.5$]{\includegraphics[width=0.31\textwidth]{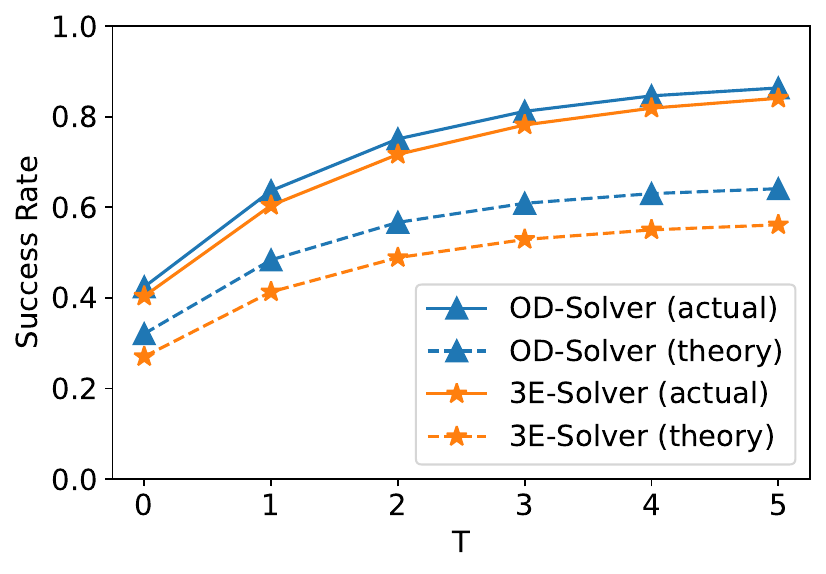}}
        \hspace{1mm}
        \subfloat[$\alpha=0.8$]{\includegraphics[width=0.31\textwidth]{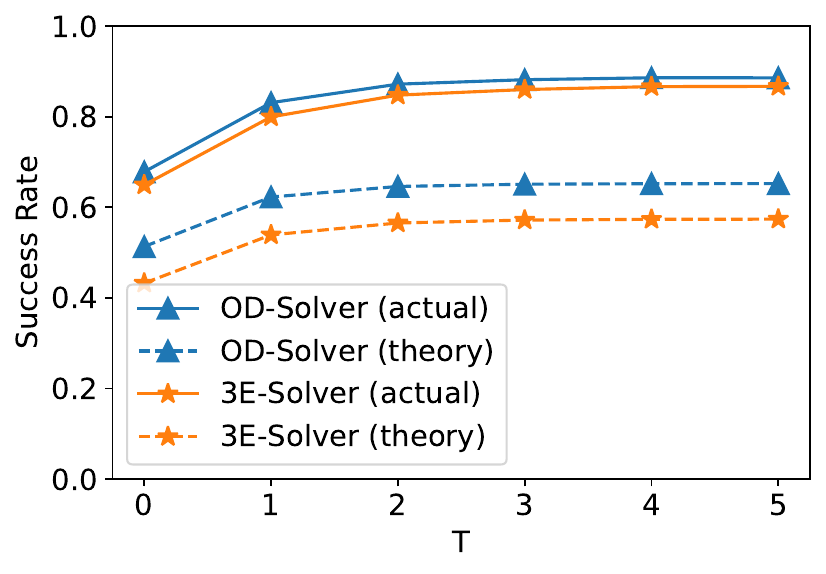}}
    \end{tabular}
    \caption{Actual and theoretical success rates of LEnSolver with respect to the maximum number of skips (T) for different values of $\alpha$ and base model types.}
    \label{fig:lensoler_rate}
\end{figure}

\subsection{Discussions}
\label{sec:discuss}

In our experiment results in Tables~\ref{tab:ensolver} and~\ref{tab:lensolver}, there are some cases where the theoretical rates are higher than the empirical rates (e.g., when $M=6$, $\tau=0.7$ and $\alpha=0.8$ in Table~\ref{tab:ensolver} or when $M=2$ and $\alpha=1$ in Table~\ref{tab:lensolver}), although our theorems prove lower bound results for these theoretical rates. This is due to the uniform assumptions (A2) and (A3) in Section~\ref{sec:theory-setting} being violated in practice. Thus, a potential direction for future work is to relax these assumptions to develop an improved theory for our methods. Nevertheless, even in these cases, our experiment results still show that the theoretical rates are close to the empirical rates. 

Another issue with our theoretical rates is the relatively loose bound compared to the empirical rates for large $M$. This is due to the fact that we derive our bounds using $\beta_{\text{min}}$ and $\beta_{\text{max}}$. In principle, we can make the bounds tighter by using all the $\beta_i$'s of the base models. But this would make the bounds more complicated to derive and compute for large ensembles. A potential future work would be to derive tighter bounds that are both simple and easy to compute. It is also worth to emphasize that even though our bounds are loose, they are non-trivial and can be used as pessimistic indicators of the solvers' effectiveness, and the actual success rates would be much higher.

\section{Conclusion}

We proposed EnSolver and LEnSolver, novel end-to-end uncertainty-aware CAPTCHA solvers that can detect and skip out-of-distribution inputs. Our solvers use a deep ensemble of base CAPTCHA solvers to obtain uncertainty estimates and use them to make decisions. We prove theoretical guarantees on the effectiveness of the approaches and show empirically that our solvers can achieve good success rates in the presence of out-of-distribution data. Our work is potentially helpful for security experts to better understand the capability of automatic CAPTCHA solvers and improve the defense against these attacks.

\newpage

% Acknowledgements and Disclosure of Funding should go at the end, before appendices and references

\acks{This work was partially supported by the U.S.~National Science Foundation (Award: 2331908), US National Security Agency (Award: H982302110324) and Microsoft Security AI. The views expressed are those of the authors
only and not of the funding agencies.}

\appendix

\section{Proofs of Theoretical Results}
\label{sec:proofs}

\minisection{Additional notations.}
Recall that the EnSolver $m$ consists of $M$ base models $m_1, m_2, \ldots, \allowbreak m_M$. For any $x \in \mathcal{X}$ and $s \in \mathcal{S}$, let $n(x, s)$ be the number of base models predicting the output $s$ for $x$. That is,
\begin{align*}
n(x, s) = \left| \{ i : m_i(x) = s, ~ i \in \{ 1, 2, \ldots, M \} \} \right|.
\end{align*}
For any $i \in \{ 1, 2, \ldots, M \}$, let $\mathcal{K}_i$ be the set of all subsets of size $i$ of $\{ 1, 2, \ldots, M \}$: 
\begin{align*}
\mathcal{K}_i = \{ \kappa \subseteq \{ 1, 2, \ldots, M \} : |\kappa| = i \}.
\end{align*}
For any $\kappa \in \mathcal{K}_i$, we denote $\widebar{\kappa} = \{ 1, 2, \ldots, M \} \setminus \kappa$. We also let $\widebar{\mathcal{K}}_i$ be the set of all $\widebar{\kappa}$'s:
\begin{align*}
\widebar{\mathcal{K}}_i = \{ \widebar{\kappa} = \{ 1, 2, \ldots, M \} \setminus \kappa : \kappa \in \mathcal{K}_i \}.
\end{align*}
We now prove the following lemma that will be used in our proofs.

\begin{lem}
    The following inequality holds:
    \begin{align*}
        \sum_{i=M(1-\tau)+1}^{M} \begin{pmatrix}M \\ i \end{pmatrix}\frac{1}{(N_{\mathcal{S}})^i} < \frac{\mathcal{E}(M, N_{\mathcal{S}}, \tau)}{N_{\mathcal{S}}-1}.
    \end{align*}
\label{lem1}
\end{lem}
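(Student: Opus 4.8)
The plan is to bound the sum term-by-term using two elementary facts: the central binomial coefficient $\binom{M}{\lfloor M/2 \rfloor}$ dominates every $\binom{M}{i}$, and a truncated geometric tail is dominated by the full geometric series. Throughout, write $k = M(1-\tau)+1$ and $N = N_{\mathcal{S}}$, so that the left-hand side is $\sum_{i=k}^{M}\binom{M}{i}\,N^{-i}$, and recall from Definition~\ref{def:ceb} that $\mathcal{E}(M, N_{\mathcal{S}}, \tau) = \binom{M}{\lfloor M/2 \rfloor}\,N^{-M(1-\tau)}$. Note that $M(1-\tau)$ is a positive integer by the convention adopted in Section~\ref{sec:theory-setting}, so $k$ is a well-defined integer, and $N \ge 2$ since $N_{\mathcal{S}}$ is the size of a finite set of strings.

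First I would replace each binomial coefficient by its maximum value, using $\binom{M}{i} \le \binom{M}{\lfloor M/2 \rfloor}$ for every $i \in \{0, 1, \ldots, M\}$, and pull this constant out of the sum to obtain $\sum_{i=k}^{M}\binom{M}{i}\,N^{-i} \le \binom{M}{\lfloor M/2 \rfloor}\sum_{i=k}^{M} N^{-i}$. Next I would bound the finite geometric sum by its infinite counterpart: since $N \ge 2$ and $M$ is finite, the omitted tail $\sum_{i=M+1}^{\infty} N^{-i}$ is strictly positive, giving the strict inequality $\sum_{i=k}^{M} N^{-i} < \sum_{i=k}^{\infty} N^{-i} = \frac{N^{-k}}{1 - N^{-1}} = \frac{1}{N^{k-1}(N-1)}$. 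Combining the two steps and using $k-1 = M(1-\tau)$, the right-hand side collapses to $\frac{1}{N-1}\binom{M}{\lfloor M/2 \rfloor}\,N^{-M(1-\tau)} = \frac{\mathcal{E}(M, N_{\mathcal{S}}, \tau)}{N_{\mathcal{S}}-1}$, which is precisely the claimed bound.

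There is no genuine obstacle here; this is a direct estimate. The only two points requiring care are that the overall inequality is \emph{strict} (this is inherited from the geometric-tail comparison, which is strict because the series has a nonempty tail beyond $i = M$ for $N \ge 2$, even though the binomial-coefficient step is only non-strict) and that the exponent bookkeeping $k - 1 = M(1-\tau)$ aligns exactly with the power of $N_{\mathcal{S}}$ appearing in the definition of the OEB.
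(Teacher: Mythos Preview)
Your proof is correct and follows essentially the same route as the paper: bound each $\binom{M}{i}$ by the central coefficient $\binom{M}{\lfloor M/2\rfloor}$, then control the remaining geometric sum by its full tail to obtain the strict inequality. The only cosmetic difference is that the paper writes out the finite geometric sum explicitly and drops the factor $1 - N^{-(M+1-k)} < 1$, whereas you pass directly to the infinite series; the two arguments are equivalent.
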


\begin{proof}
    For brevity, we denote $k=M(1-\tau)+1$.
    For any positive integer $M$ and any integer $i$ with $0\le i \le M$, we have:
    \begin{align*}
        \begin{pmatrix}M \\ i \end{pmatrix} \le \begin{pmatrix}M \\ \lfloor M/2 \rfloor \end{pmatrix}.
    \end{align*}
    
    Therefore,
    \begin{align*}
        \sum_{i=k}^{M} \begin{pmatrix}M \\ i \end{pmatrix}\frac{1}{(N_{\mathcal{S}})^i}
        &\le \sum_{i=k}^{M} \begin{pmatrix}M \\ \lfloor M/2 \rfloor \end{pmatrix}\frac{1}{(N_{\mathcal{S}})^i} \\
        &= \begin{pmatrix}M \\ \lfloor M/2 \rfloor \end{pmatrix}\frac{1}{(N_{\mathcal{S}})^k}\sum_{i=0}^{M-k}\frac{1}{(N_{\mathcal{S}})^i} \\
        &= \begin{pmatrix}M \\ \lfloor M/2 \rfloor \end{pmatrix}\frac{1}{(N_{\mathcal{S}})^k}.\frac{1-\frac{1}{(N_{\mathcal{S}})^{M+1-k}}}{1-\frac{1}{N_{\mathcal{S}}}} \\
        &< \begin{pmatrix}M \\ \lfloor M/2 \rfloor \end{pmatrix}\frac{1}{(N_{\mathcal{S}})^k}.\frac{1}{1-\frac{1}{N_{\mathcal{S}}}} \\
        &= \begin{pmatrix}M \\ \lfloor M/2 \rfloor \end{pmatrix} \frac{1}{(N_{\mathcal{S}})^{k-1}(N_{\mathcal{S}} - 1)} = \frac{\mathcal{E}(M, N_{\mathcal{S}}, \tau)}{N_{\mathcal{S}}-1}.
    \end{align*}
    {\vskip -0.7cm}
\end{proof}

\subsection{Proof of Lemma~\ref{lem:oeb_out}}
\label{proof:oeb_out}

Let $k=M(1-\tau)+1$. We have:
\begin{align*}
    p &\left( m(x) \notin \{ s_x, s_{\text{skip}} \} \,|\, x \in \mathcal{X}^{\text{out}} \right) \\
    &= p \left( \exists s \in \mathcal{S} \setminus \{ s_x \}, ~ m(x) = s \,|\, x \in \mathcal{X}^{\text{out}} \right) \\
    &\le p \left( \exists s \in \mathcal{S} \setminus \{ s_x \}, ~ n(x,s) \ge k \,|\, x \in \mathcal{X}^{\text{out}} \right) \\
    &\le \sum_{s\in \mathcal{S} \setminus \{s_x\}} p \left( n(x,s) \ge k \,|\, x \in \mathcal{X}^{\text{out}} \right) \\
    &= \sum_{s\in \mathcal{S} \setminus \{s_x\}} \, \sum_{i=k}^{M} \begin{pmatrix}M \\ i \end{pmatrix}\frac{1}{(N_{\mathcal{S}})^i}\left(1-\frac{1}{N_{\mathcal{S}}}\right)^{M-i} \tag{assumption A3} \\
    &< \sum_{s\in \mathcal{S} \setminus \{s_x\}} \, \sum_{i=k}^{M} \begin{pmatrix}M \\ i \end{pmatrix}\frac{1}{(N_{\mathcal{S}})^i} = (N_{\mathcal{S}}-1)\sum_{i=k}^{M} \begin{pmatrix}M \\ i \end{pmatrix}\frac{1}{(N_{\mathcal{S}})^i} < \mathcal{E}(M, N_{\mathcal{S}}, \tau),
\end{align*}
where the last inequality comes from Lemma~\ref{lem1}.
$\hfill \blacksquare$

\subsection{Proof of Lemma~\ref{lem:oeb_in}}
\label{proof:oeb_in}

Let $k=M(1-\tau)+1$. For any $s \in \mathcal{S}$ such that $s\ne s_x$, we have:
\begin{align*}
    p &\left( n(x, s) \ge k \,|\, x \in \mathcal{X}^{\text{in}} \right) \\
    &= \sum_{i=k}^{M} \, \sum_{\kappa \in \mathcal{K}_i} \left( \, \prod_{j\in \kappa}\left(\frac{1-\beta_j}{N_{\mathcal{S}}-1}\right)\prod_{t\in \widebar\kappa}\left(1-\frac{1-\beta_t}{N_{\mathcal{S}}-1}\right) \right) \tag{assumption A2} \\
    &\le \sum_{i=k}^{M} \, \sum_{\kappa\in \mathcal{K}_i}\left(\frac{1-\beta_{\text{min}}}{N_\mathcal{S}-1}\right)^i\left(1-\frac{1-\beta_{\text{max}}}{N_{\mathcal{S}}-1}\right)^{M-i} \\
    &= \sum_{i=k}^{M}\begin{pmatrix}M \\ i \end{pmatrix}\left(\frac{1-\beta_{\text{min}}}{N_\mathcal{S}-1}\right)^i\left(1-\frac{1-\beta_{\text{max}}}{N_{\mathcal{S}}-1}\right)^{M-i} \\
    &< \sum_{i=k}^{M} \begin{pmatrix}M \\ i \end{pmatrix}\frac{1}{(N_{\mathcal{S}})^i} \tag{assumption A1 and $1-\frac{1-\beta_{\text{max}}}{N_{\mathcal{S}}-1} < 1$} \\
    &< \frac{\mathcal{E}(M, N_{\mathcal{S}}, \tau)}{N_{\mathcal{S}}-1}. \tag{lemma~\ref{lem1}}
\end{align*}
Therefore,
\begin{align*}
    p \left( \exists s \ne s_x, ~ n(x, s) \ge k \,|\, x \in \mathcal{X}^{\text{in}} \right)
    &\le \sum_{s\in \mathcal{S} \setminus \{s_x\}} p \left( n(x, s) \ge k \,|\, x \in \mathcal{X}^{\text{in}} \right) \\
    &< \sum_{s\in \mathcal{S} \setminus \{s_x\}} \frac{\mathcal{E}(M, N_{\mathcal{S}}, \tau)}{N_{\mathcal{S}}-1} = \mathcal{E}(M, N_{\mathcal{S}}, \tau).
\end{align*}
$\hfill \blacksquare$

\subsection{Proof of Theorem~\ref{thrm:rdr}}
\label{proof:rdr}

From Definition~\ref{def:rdr}, we have:
\begin{align}
R_{\text{rd}}(m) &=
p(m(x) = s_x, x \in \mathcal{X}^{\text{in}}) + p(m(x) \in \{s_x, s_{\text{skip}}\}, x \in \mathcal{X}^{\text{out}}) \notag \\
&= p(m(x) = s_x \,|\, x \in \mathcal{X}^{\text{in}}) \, p(x \in \mathcal{X}^{\text{in}}) + p(m(x) \in \{s_x, s_{\text{skip}}\} \,|\, x \in \mathcal{X}^{\text{out}}) \, p(x \in \mathcal{X}^{\text{out}}) \notag \\
&= \alpha \, p(m(x) = s_x \,|\, x \in \mathcal{X}^{\text{in}}) + (1 - \alpha) \, p(m(x) \in \{s_x, s_{\text{skip}}\} \,|\, x \in \mathcal{X}^{\text{out}}). \label{eq:rm}
\end{align}
From Lemma~\ref{lem:oeb_out},
\begin{align}
p(m(x) \in \{s_x, s_{\text{skip}}\} \,|\, x \in \mathcal{X}^{\text{out}})
&= 1 - p(m(x) \notin \{s_x, s_{\text{skip}}\} \,|\, x \in \mathcal{X}^{\text{out}}) \notag \\
&\ge 1 - \mathcal{E}(M, N_{\mathcal{S}}, \tau).
\label{eq:thrm_rate_1}
\end{align}
Thus, we only need to lower bound $p(m(x) = s_x \,|\, x \in \mathcal{X}^{\text{in}})$. Let $k=M(1-\tau)+1$ and consider the following two events:
\begin{align*}
(A) &: n(x, s_x) \ge k, \\
(B) &: n(x, s) < n(x, s_x), \forall s \ne s_x.
\end{align*}
Note that:
\begin{align}
p(m(x) = s_x \,|\, x \in \mathcal{X}^{\text{in}}) 
&= p(A \wedge B \,|\, x \in \mathcal{X}^{\text{in}}) \notag \\
&= p(A \,|\, x \in \mathcal{X}^{\text{in}}) - p(A \wedge \neg B \,|\, x \in \mathcal{X}^{\text{in}}). \label{eq:thrm_rate_2}
\end{align}
We have:
\begin{align}
p(A \,|\, x \in \mathcal{X}^{\text{in}}) &= \sum_{i=k}^M \; \sum_{\kappa \in \mathcal{K}_i} \left( \prod_{j \in \kappa} \beta_j \right) \left( \prod_{t \in \widebar\kappa} ( 1-\beta_t ) \right) \notag \\
&\ge \sum_{i=k}^M \; \sum_{\kappa \in \mathcal{K}_i} \beta_{\text{min}}^i \, ( 1-\beta_{\text{max}} )^{M-i} \notag \\
&= \sum_{i=k}^M \begin{pmatrix} M \\ i \end{pmatrix} \beta_{\text{min}}^i \, ( 1-\beta_{\text{max}} )^{M-i}. \label{eq:thrm_rate_3}
\end{align}
We also have:
\begin{align}
p(A \wedge \neg B \,|\, x \in \mathcal{X}^{\text{in}}) &= p \left( \left\{ \begin{array}{l} n(x, s_x) \ge k \\ \exists s \ne s_x, n(x, s) \ge n(x, s_x) \end{array} \right. \Big| ~ x \in \mathcal{X}^{\text{in}} \right) \notag \\
&\le p \left( \left\{ \begin{array}{l} n(x, s_x) \ge k \\ \exists s \ne s_x, n(x, s) \ge k \end{array} \right. \Big| ~ x \in \mathcal{X}^{\text{in}} \right) \notag \\
&\le p \left( \exists s \ne s_x, n(x, s) \ge k ~|~ x \in \mathcal{X}^{\text{in}} \right) \notag \\
&< \mathcal{E}(M, N_{\mathcal{S}}, \tau), \label{eq:thrm_rate_4}
\end{align}
where the last inequality is from Lemma~\ref{lem:oeb_in}. Combining~\eqref{eq:thrm_rate_2},~\eqref{eq:thrm_rate_3} and~\eqref{eq:thrm_rate_4}, we have:
\begin{align}
p(m(x) = s_x \,|\, x \in \mathcal{X}^{\text{in}}) 
&> \sum_{i=k}^M \begin{pmatrix} M \\ i \end{pmatrix} \beta_{\text{min}}^i \, ( 1-\beta_{\text{max}} )^{M-i} - \mathcal{E}(M, N_{\mathcal{S}}, \tau). \label{eq:thrm_rate_5}
\end{align}
From~\eqref{eq:rm},~\eqref{eq:thrm_rate_1} and~\eqref{eq:thrm_rate_5}, we have:
\begin{align*}
R_{\text{rd}}(m)
&> \alpha \left( \, \sum_{i=k}^M \begin{pmatrix} M \\ i \end{pmatrix} \beta_{\text{min}}^i \, ( 1-\beta_{\text{max}} )^{M-i} - \mathcal{E}(M, N_{\mathcal{S}}, \tau) \right) + (1 - \alpha) \left( 1 - \mathcal{E}(M, N_{\mathcal{S}}, \tau) \right) \\
&= \alpha \sum_{i=k}^M \begin{pmatrix} M \\ i \end{pmatrix} \beta_{\text{min}}^i \, ( 1-\beta_{\text{max}} )^{M-i} + (1 - \alpha) - \mathcal{E}(M, N_{\mathcal{S}}, \tau).
\end{align*}
$\hfill \blacksquare$

\subsection{Proof of Lemma~\ref{lem:correct_rate}}
\label{proof:correct_rate}

Note that:
\begin{align*}
    p \left( m(x) = s_x \right) = \alpha \, p(m(x) = s_x \,|\, x \in \mathcal{X}^{\text{in}}) + (1-\alpha) \, p(m(x) = s_x \,|\, x \in \mathcal{X}^{\text{out}}).
\end{align*}
From~\eqref{eq:thrm_rate_5} in the proof of Theorem~\ref{thrm:rdr} above, we have:
\begin{align*}
    p(m(x) = s_x \,|\, x \in \mathcal{X}^{\text{in}}) > \sum_{i=k}^M \begin{pmatrix} M \\ i \end{pmatrix} \beta_{\text{min}}^i \, ( 1-\beta_{\text{max}} )^{M-i} - \mathcal{E}(M, N_{\mathcal{S}}, \tau).
\end{align*}
In addition, $p(m(x) = s_x \,|\, x \in \mathcal{X}^{\text{out}}) > 0$. Therefore,
\begin{align*}
    p \left( m(x) = s_x \right) > \alpha \sum_{i=k}^{M}\begin{pmatrix} M \\ i \end{pmatrix}\beta_{\text{min}}^i (1-\beta_{\text{max}})^{M-i} - \alpha \, \mathcal{E}(M, N_{\mathcal{S}}, \tau).
\end{align*}
$\hfill \blacksquare$

\subsection{Proof of Lemma~\ref{lem:skip_rate}}
\label{proof:skip_rate}

Similar to the proof above, we decompose the probability $p \left( m(x) = s_{\text{skip}} \right)$ as:
    \begin{align}
        p \left( m(x) = s_{\text{skip}} \right) = \alpha \, p(m(x) = s_{\text{skip}} \,|\, x \in \mathcal{X}^{\text{in}}) + (1-\alpha) \, p(m(x) = s_{\text{skip}} \,|\, x \in \mathcal{X}^{\text{out}}). \label{eq:lem_skip_1}
    \end{align}
We first compute a lower bound for the probability on $\mathcal{X}^{\text{out}}$. Let $k=M(1-\tau)+1$. Since $m(x)=s_{\text{skip}}$ if and only if $n(x, s) < k ~ \forall s\in \mathcal{S}$, we have:
\begin{align*}
    p(m(x) = s_{\text{skip}} \,|\, x \in \mathcal{X}^{\text{out}})
    &= 1 - p \left( \exists s \in \mathcal{S}, ~ n(x, s) \ge k \,|\, x \in \mathcal{X}^{\text{out}} \right) \\
    &\ge 1 - \sum_{s \in \mathcal{S}} p \left( n(x, s) \ge k \,|\, x \in \mathcal{X}^{\text{out}} \right).
\end{align*}
Using the same argument in the proof of Lemma~\ref{lem:oeb_out}, for each $s\in \mathcal{S}$, we have:
\begin{align*}
    p \left( n(x, s) \ge k \,|\, x \in \mathcal{X}^{\text{out}} \right) < \frac{\mathcal{E}(M, N_{\mathcal{S}}, \tau)}{N_{\mathcal{S}}-1}.
\end{align*}
This implies:
\begin{align}
    p(m(x) = s_{\text{skip}} \,|\, x \in \mathcal{X}^{\text{out}}) > 1 - \sum_{s \in \mathcal{S}}\frac{\mathcal{E}(M, N_{\mathcal{S}}, \tau)}{N_{\mathcal{S}}-1} = 1 - \frac{N_{\mathcal{S}}}{N_{\mathcal{S}}-1}\mathcal{E}(M, N_{\mathcal{S}}, \tau). \label{eq:lem_skip_2}
\end{align}
Next, we compute a lower bound for the probability on $\mathcal{X}^{\text{in}}$. We have:
\begin{align*}
    p(m(x) &= s_{\text{skip}} \,|\, x \in \mathcal{X}^{\text{in}}) \\
    &= p(\forall s\in \mathcal{S}, n(x, s) < k \,|\, x \in \mathcal{X}^{\text{in}}) \\
    &= p((n(x, s_x) < k) \wedge (\forall s\ne s_x, n(x, s) < k) \,|\, x \in \mathcal{X}^{\text{in}}) \\
    &= p((n(x, s_x) < k) \wedge \neg (\exists s\ne s_x, n(x, s) \ge k \,|\, x \in \mathcal{X}^{\text{in}}) \\
    &\ge p(n(x, s_x) < k \,|\, x \in \mathcal{X}^{\text{in}}) - p(\exists s\ne s_x, n(x, s) \ge k \,|\, x \in \mathcal{X}^{\text{in}}),
\end{align*}
where we use the fact that $p(A \wedge \neg B) \ge p(A) - p(B)$. From Lemma~\ref{lem:oeb_in}, we know that $p(\exists s\ne s_x, n(x, s) \ge k \,|\, x \in \mathcal{X}^{\text{in}}) < \mathcal{E}(M, N_{\mathcal{S}}, \tau)$. Besides, using a similar argument when deriving~\eqref{eq:thrm_rate_3} in the proof of Theorem~\ref{thrm:rdr}, we have:
\begin{align*}
    p(n(x, s_x) < k \,|\, x \in \mathcal{X}^{\text{in}})
    &= \sum_{i=0}^{k-1} \, \sum_{\kappa \in \mathcal{K}_i} \left( \prod_{j\in \kappa}\beta_j \right) \left( \prod_{t\in \widebar\kappa}(1-\beta_t) \right) \\
    &\ge \sum_{i=0}^{k-1} \, \sum_{\kappa \in \mathcal{K}_i}\beta_{\text{min}}^i (1-\beta_{\text{max}})^{M-i} \\
    &= \sum_{i=0}^{k-1} \begin{pmatrix} M \\ i \end{pmatrix}\beta_{\text{min}}^i (1-\beta_{\text{max}})^{M-i}.
\end{align*}
Thus,
\begin{align}
    p(m(x) = s_{\text{skip}} \,|\, x \in \mathcal{X}^{\text{in}}) > \sum_{i=0}^{k-1} \begin{pmatrix} M \\ i \end{pmatrix}\beta_{\text{min}}^i (1-\beta_{\text{max}})^{M-i} - \mathcal{E}(M, N_{\mathcal{S}}, \tau). \label{eq:lem_skip_3}
\end{align}
From \eqref{eq:lem_skip_1}, \eqref{eq:lem_skip_2} and \eqref{eq:lem_skip_3}, we have:
\begin{align*}
    p \left( m(x) = s_{\text{skip}} \right)
    &> \alpha\left( \, \sum_{i=0}^{k-1}\begin{pmatrix} M \\ i \end{pmatrix}\beta_{\text{min}}^i (1-\beta_{\text{max}})^{M-i} - \mathcal{E}(M, N_{\mathcal{S}}, \tau) \right) \\
    &\qquad + (1-\alpha)\left( 1 - \frac{N_{\mathcal{S}}}{N_{\mathcal{S}}-1}\mathcal{E}(M, N_{\mathcal{S}}, \tau) \right) \\
    &= \alpha \sum_{i=0}^{k-1}\begin{pmatrix} M \\ i \end{pmatrix}\beta_{\text{min}}^i (1-\beta_{\text{max}})^{M-i} + (1-\alpha) - \frac{N_{\mathcal{S}}-\alpha}{N_{\mathcal{S}}-1} \, \mathcal{E}(M, N_{\mathcal{S}}, \tau).
\end{align*}
$\hfill \blacksquare$

\vskip 0.2in
\bibliography{jmlr_main}

\end{document}